\documentclass{article}
\usepackage[OT1]{fontenc}

\usepackage[table]{xcolor}
\usepackage[preprint]{neurips_2024}
\usepackage{natbib}
\usepackage{amsmath,amssymb,amsfonts,amsthm,mathtools,bm}
\usepackage{booktabs,multirow,array,graphicx,subcaption}
\graphicspath{{./}{../}}
\usepackage{microtype,url,hyperref}
\usepackage{placeins}
\usepackage{enumitem}
\usepackage{algorithm}
\usepackage{algpseudocode}
\usepackage{tikz}
\definecolor{customblue}{HTML}{b9dcb6}  
\usetikzlibrary{arrows.meta,positioning}

\definecolor{dblue}{HTML}{174A7E}
\usepackage{tcolorbox}
\tcbuselibrary{skins,breakable}
\hypersetup{colorlinks=true,linkcolor=dblue,citecolor=dblue,urlcolor=dblue}

\newtheorem{theorem}{Theorem}
\newtheorem{proposition}{Proposition}

\newtheorem{lemma}{Lemma}
\theoremstyle{definition}
\newtheorem{definition}{Definition}

\newtheorem{assumption}{Assumption}

\theoremstyle{remark}

\newcommand{\cC}{\mathsf{C}}                 
\newcommand{\cCz}{\mathsf{C}_0}              
\newcommand{\cD}{\mathsf{D}}                 
\newcommand{\incl}{K}                        
\newcommand{\cK}{\mathcal{K}}                
\newcommand{\cL}{\mathcal{L}}                
\newcommand{\Fz}{F_0}                        
\newcommand{\Ext}{\mathrm{Ext}}
\newcommand{\ExtN}{\mathrm{Ext}_N}
\newcommand{\Adm}{\mathrm{Adm}}
\newcommand{\Canon}{\mathrm{Canon}}
\newcommand{\Lan}{\mathrm{Lan}}
\newcommand{\Ran}{\mathrm{Ran}}
\newcommand{\Kan}{\mathrm{Kan}}
\newcommand{\Lans}{\Lan^{*}}
\newcommand{\Rans}{\Ran^{*}}
\newcommand{\FinSet}{\mathbf{FinSet}}
\newcommand{\In}{\mathrm{In}}
\newcommand{\Out}{\mathrm{Out}}
\newcommand{\Ob}{\mathrm{Ob}}
\newcommand{\Hom}{\mathrm{Hom}}
\newcommand{\Aut}{\mathrm{Aut}}
\newcommand{\Sym}{\mathrm{Sym}}
\newcommand{\chance}{\mathrm{chance}}
\newcommand{\KD}{\mathrm{KD}}
\newcommand{\DC}{\mathrm{DC}}
\newcommand{\E}{\mathbb{E}}
\newcommand{\Pp}{\mathbb{P}}
\newcommand{\N}{\mathbb{N}}
\newcommand{\id}{\mathrm{id}}

\hypersetup{colorlinks=true,linkcolor=red!70!black,linktocpage=false,citebordercolor=blue!70!black,citecolor=blue!70!black,anchorcolor=blue!70!black}

\title{When the Canonical Completion Is Wrong:\\
Formalizing and Measuring the Jump
in LLMs}

\author{%
\textbf{Dai Shi}\,\textsuperscript{1} \quad \textbf{Xiaoyu Li}\,\textsuperscript{2} \quad
\textbf{Jos\'e Miguel Hern\'andez-Lobato}\,\textsuperscript{1} \\
{\footnotesize \textsuperscript{1}University of Cambridge \quad \textsuperscript{2}University of New South Wales}
}

\begin{document}

\maketitle

\vspace{-15pt}
\raggedbottom
\begin{abstract}
\vspace{-5pt}
Whether large language models (LLMs) can perform the abductive leap from
evidence to a new system of axioms, commonly referred to as a jump, has
recently attracted considerable debate. A prominent position holds that LLMs
are structurally incapable of such jumps, while recent studies challenge both
its mechanism and empirical evidence. One of the main reasons why the debate
remains open is the difficulty of defining the jump precisely enough to test it.
In this paper, we attempt to develop a formal account of the jump in four steps
and measure the second. These steps ask what the default completion of partial
data is, when the constraints exclude it, whether the new structure agrees
with later observations, and how successive jumps compound. We define a
\emph{jump instance} as a finite extension problem whose constraints exclude
the canonical completions given by the Kan extensions and leave one correct
completion up to renaming. In this setting, a model with a canonical default
performs the second step by producing the correct completion under the constraints.
We evaluate fourteen models across three certified families. The canonical
completion returns once in $13{,}300$ constrained answers across all runs.
Several calibrated models also give the correct completion reliably, including
three API models that solve $159$ of $162$ primary chain trials, suggesting
that they can jump at this step. We further formalize the third and fourth
steps, whose empirical evaluation remains future work. We hope our work paves
the path for formalizing and measuring the full jump in the future. The code
of the paper is available at \url{https://github.com/EEthanShi/kan-jump-test}.
\end{abstract}

\begin{figure}[H]
\centering
\scalebox{0.80}{
\includegraphics[width=\textwidth]{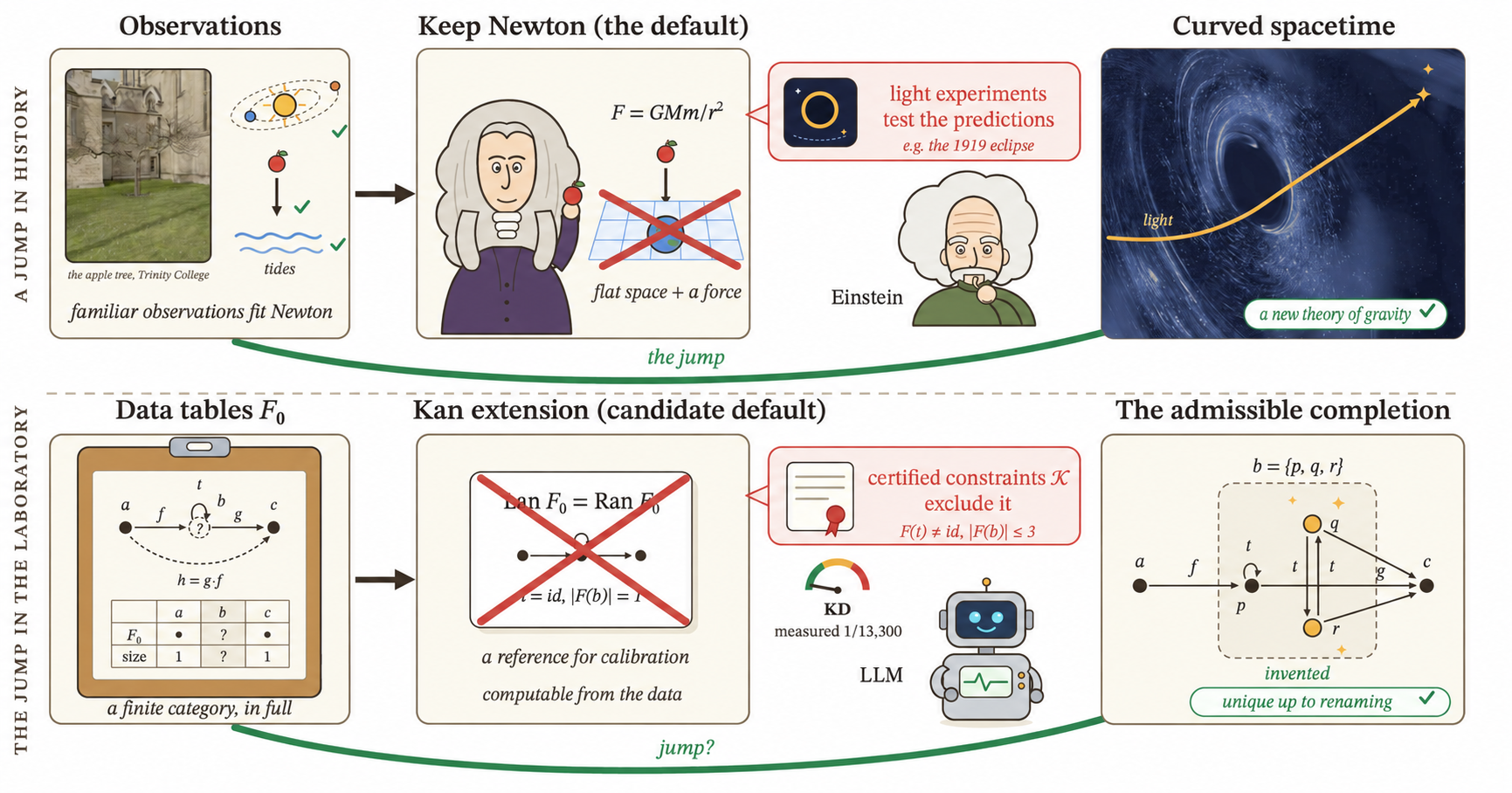}}
\refstepcounter{figure}\label{fig:overview}
\caption*{A historical analogy and the laboratory test. Top: observations of light
deflection test the predictions of Newtonian gravity and general relativity.
Bottom: partial tables leave $b$ unspecified. The Kan extensions supply a
singleton reference completion, while the stated constraints select a
three-element completion, unique up to renaming. Calibration checks whether
the reference describes the model's default; the constrained task tests
whether the model constructs the admissible completion.}
\end{figure}

\section{Introduction}\label{sec:intro}

Can a large language model (LLM) make the kind of leap by which Einstein
passed from Newtonian gravity to the curved spacetime of general relativity?
Such a leap, commonly referred to as a jump, is the abductive move from
evidence to a new system of axioms \citep{peirce1934collected}.
\citet{zahavy2026jump} raised this question and answered it negatively,
arguing that LLMs have mastered induction and are increasingly mastering
deduction, yet remain structurally incapable of the jump. Its proposed
mechanism has since been challenged \citep{farmer2026abduction}, while
earlier work on LLM-guided algorithm discovery provides related empirical
evidence \citep{novikov2025alphaevolve}. However, the debate remains
difficult to settle without a shared formal definition of the jump and
a measure by which the competing claims could be tested.

In this paper, we define a jump in LLMs in four steps and evaluate the first
two via category theory. We do this by testing whether a model can complete
data against its usual default, in a setting whose constraints exclude the
answer that comes naturally to it. These four steps ask what the default
completion of the data is, when the constraints exclude that default,
whether the new structure agrees with later observations, and how successive
jumps compound. The first two steps together ask
\begin{tcolorbox}[enhanced, colback=blue!5, colframe=blue!35!black,
boxrule=0.6pt, arc=1.5pt, left=7pt, right=7pt, top=5pt, bottom=5pt, breakable]
\centering\textit{Can a model abandon a default that violates the stated constraints and produce a correct completion?}
\end{tcolorbox}

Accordingly, our first step is to define what a default completion is. We do
so using category theory to describe how a structure given on part of a
domain continues over the rest. Specifically, when a functor defined on
part of a category is extended to the whole, the left and right Kan extensions
give two distinguished continuations characterized by their universal
properties \citep{maclane1998categories}. In our finite-set setting, both
extensions are computable from the data alone, and we call them
\emph{canonical completions}. Recent work uses Kan extensions to formulate
learning and error-minimization algorithms
\citep{shiebler2022kan,pugh2025learning}. This motivates treating the
canonical completions as candidate defaults, which we test against each
model's unconstrained answers in Section~\ref{sec:capability}.

For a model with a canonical default, we can test whether it abandons that
completion when the constraints exclude it. This is the second step, which
we call the \emph{override}. We test it on extension problems whose
constraints exclude the canonical completions and admit a single correct
structure up to renaming, and we call such a problem a \emph{jump instance}.
The exclusion distinguishes the correct answer from the default, while
uniqueness makes its structure identifiable for grading. In our certified
families, this answer contains invented elements and relations selected by
the constraints while preserving the observed data. A correct answer thus
supplies structure beyond the observed tables. We certify exclusion and
uniqueness by checking them against every bounded candidate completion with
an algorithm we release. To extend certification beyond exhaustive checking,
we further prove a family theorem that certifies arbitrarily long chains
and computes their chance levels in closed form.

To quantify the override, we evaluate a model on the same instance in
independent calls with the excluding constraints removed and with them supplied.
We compare how often the answer is canonical without the constraints, the
\emph{default-canonicity rate}, with how often it stays canonical under them,
the \emph{Kan-default rate} (Section~\ref{sec:capability}). We also grade
correctness and check matched controls that request a canonical completion.
Our evaluation covers fourteen models and 259 certified instances from three
families. Across the main and supplementary runs, the canonical completion
returns once in $13{,}300$ constrained answers. Successful overrides occur
where canonical calibration accompanies high constrained accuracy.
On the primary chain rendering, three API models return canonical answers
in $106/108$ calibration calls and correct answers in $159/162$ constrained
calls, passing all $54$ matched controls. Several open models also combine
canonical calibration with high accuracy on larger chains and on instances
with distinct Kan extensions. Across the tested configurations, the smallest
certified success probability for a uniformly drawn bounded extension is
$6.1\times10^{-10}$. These results show that several models can correctly
override a measured canonical default under the supplied constraints.

Having observed correct overrides at the second step, we formalize the third
and fourth steps without measuring them, and we hope this work paves the path
toward measuring the full jump. These later steps ask how a completion
agrees with additional evidence and how successive commitments affect its
continuation. In our current test, we provide the constraints and the form
of the answer, so the model finds the answer inside a setting we have already
fixed, which we call Tier~1. The broader question raised by Einstein's leap
also concerns how a model proposes the constraints and the answer space
itself \citep{zahavy2026jump}. We discuss this distinction in
Section~\ref{sec:scope}.

\paragraph{Organization.} The rest of the paper is organized as follows.
Section~\ref{sec:prelim} reviews the necessary knowledge in category theory.
Section~\ref{sec:formulation} defines jump instances, their matched controls,
and the calibrated capability. Section~\ref{sec:props} proves this
formulation is well-posed under the stated assumptions and supplies
computable chance levels, including a family theorem for arbitrarily long
chains. Section~\ref{sec:beyond} formalizes the two later steps, relating
internal to predictive correctness and showing how a later constraint can
exclude an earlier admissible commitment. In Section~\ref{sec:experiments},
we apply the test to fourteen models across three certified families and
identify successful overrides through canonical calibration and correct
constrained answers. Finally, Section~\ref{sec:scope} states what a positive
or a negative result of this test would and would not show. Section~\ref{sec:conclusion}
concludes the paper.

\raggedbottom
\setcounter{topnumber}{3}
\setcounter{bottomnumber}{2}
\setcounter{totalnumber}{5}
\renewcommand{\topfraction}{0.90}
\renewcommand{\bottomfraction}{0.85}
\renewcommand{\textfraction}{0.08}
\renewcommand{\floatpagefraction}{0.72}
\setlength{\textfloatsep}{10pt plus 2pt minus 2pt}
\setlength{\floatsep}{8pt plus 2pt minus 2pt}
\setlength{\intextsep}{9pt plus 2pt minus 2pt}
\setlength{\abovecaptionskip}{4pt}
\setlength{\belowcaptionskip}{0pt}
\makeatletter
\setlength{\@fptop}{0pt}
\setlength{\@fpsep}{10pt plus 2pt minus 2pt}
\setlength{\@fpbot}{0pt plus 1fil}
\makeatother

\section{Preliminaries}\label{sec:prelim}

To test whether a model abandons a default completion, we first specify what
is being completed and how two answers are compared. We use finite sets and
function tables, which make both the completion and its constraints directly
checkable. Category theory then supplies two distinguished completions of the
same partial tables.

\paragraph{The Extension Problem.}
A finite category $\cC$ specifies objects, morphisms between them, and an
associative composition table with identities. A functor $F\colon\cC\to\FinSet$
assigns a finite set to each object and a function to each morphism, preserving
identities and composition. We use $\cD=\FinSet$, the skeletal category of finite sets,
whose objects are $[n]=\{1,\ldots,n\}$ for $n\ge0$ and whose morphisms are all
functions. This convention represents each finite structure by explicit tables.

The observed tables form a functor $\Fz$ on a nonempty proper full subcategory
$\cCz\subset\cC$. Fullness means that the data specifies every morphism
between observed objects. Writing $\incl\colon\cCz\hookrightarrow\cC$ for the
inclusion, we ask for an extension that supplies the remaining sets and
functions while preserving these observations. The possible extensions form
\[
\Ext(\Fz):=\{F\colon\cC\to\cD\mid F\circ\incl=\Fz\}.
\]
The equality fixes the observed tables, so the model's choices concern the
new objects and the morphisms incident to them.

\paragraph{A Running Example.}
Consider $a\xrightarrow{f}b\xrightarrow{g}c$ with $h=g\circ f$, and observe
$a,c$ together with $\varphi:=\Fz(h)\colon X\to Y$. Completing the hidden
object $b$ means choosing a finite set $B$ and maps
$u\colon X\to B$, $v\colon B\to Y$ such that $v\circ u=\varphi$.
The data determines the composite, while the intermediate set and its two
maps remain to be supplied. Structural constraints can distinguish among
these factorizations, as the seed construction in Section~\ref{sec:props}
will demonstrate.

\paragraph{Equality up to Renaming.}
The names assigned to elements of $B$ do not change a factorization's
structure. More generally, two extensions $F,F'$ are \emph{gauge-equivalent}
if bijections $\sigma_c\colon F(c)\to F'(c)$ fix the observed objects pointwise
and commute with every function table. Thus, for each morphism $f\colon x\to y$,
\[
F'(f)=\sigma_y\circ F(f)\circ\sigma_x^{-1},
\qquad \sigma_c=\id\quad(c\in\cCz).
\]
We write $[F]$ for this equivalence class, its \emph{gauge component}.
For fixed sizes $n_c=|F(c)|$, these renamings form the group
$G_F=\prod_{c\notin\cCz}\Sym(n_c)$.
Uniqueness below means uniqueness of a gauge component, allowing a model to
choose its own names for invented elements.

\paragraph{The Two Kan Completions.}
The left and right Kan extensions distinguish two factorizations using the
data alone. In the running example they take the intermediate set to be a
copy of the input or a copy of the output:
\[
\begin{array}{c@{\qquad}ccc}
 & B & u & v\\[2pt]
\Lan & X & \id_X & \varphi\\
\Ran & Y & \varphi & \id_Y .
\end{array}
\]
For a general category, the left Kan extension carries observed elements
along every incoming morphism and identifies them according to the observed
relations. The right Kan extension collects compatible observed values along
every outgoing morphism. Appendix~\ref{app:kan} gives the finite formulas
and explains how they assign function tables to morphisms.

These constructions have complementary universal properties. After choosing
representatives that agree exactly with $\Fz$, the left extension $\Lans$ has
a unique data-fixing natural transformation to every extension $F$, and the
right extension $\Rans$ has a unique one from $F$.
Here a natural transformation is a family of maps commuting with the function
tables, with identity maps on the observed objects. These universal
properties determine the two completions up to gauge
\citep{maclane1998categories}; Proposition~\ref{prop:wellposed} supplies the
existence and representative-independence argument.

The universal properties justify using the Kan extensions as mathematical
references. Their relationship to a model's default is a separate empirical
question. Prior work constructs learning procedures using Kan extensions
\citep{shiebler2022kan} and represents error minimizers as Kan extensions in
suitably constructed categories \citep{pugh2025learning}. These results do not
identify an LLM's answer with either completion of our fixed finite problem.
We therefore compute the two references from the data and measure whether a
model produces them before testing the effect of the constraints.

\section{Formulation}\label{sec:formulation}

The two references let us construct a test in which satisfying the constraints
requires a noncanonical answer. To interpret success as an override, we also
need to establish that the model held a canonical default on the same data.
We first define the constrained problems and their controls, then combine
calibration with a success criterion.

\subsection{Instances}\label{sec:instances}

A \emph{Tier-1 instance} is a tuple
$S=(\cC,\cCz,\cD,\Fz,\cK,N)$, where $\cK$ specifies the constraints on a
completion and $N$ bounds the size of each new object. Both the constraints
and the form of the answer are supplied to the model. We now specify the conditions needed for finite certification.

In the finite-set setting of Section~\ref{sec:prelim}, the constraints are
polynomial-time decidable on the tables and invariant under renaming of
invented elements. Their language describes structural relations and size
bounds without naming answer elements. The constraints also entail the
stated bound $N$ at each new object, so every admissible answer belongs to a
finite, enumerable search space.

These requirements make structural answers checkable and their number
computable. Appendix~\ref{app:kan} records the complete assumptions, including
the effective conditions for other codomains, and
Appendix~\ref{app:discipline} describes the constraint-authoring checks.
We use the two Kan operators as the default library; the next definition also
permits comparison with a larger fixed library.

\subsection{The Canonical Class}\label{sec:canon}

We fix a library of reference operators before generating instances. Its
outputs specify which completions a jump instance will exclude.

\begin{definition}[Constraint-blind Operator and Canonical Class]\label{def:canon}
An extension operator $\Lambda$ reads $(\cC,\cCz,\Fz)$ without reading
$\cK$ or $N$. On its decidable domain it terminates with an extension whose
gauge component is independent of its internal choices; outside that domain
it declines. For a fixed library $\cL$, define
\[
\Canon_\cL(\Fz):=
\bigcup_{\substack{\Lambda\in\cL\\\Lambda\text{ defined on }(\cC,\cCz,\Fz)}}
[\Lambda(\cC,\cCz,\Fz)],
\qquad
\Kan(\Fz):=[\Lans]\cup[\Rans].
\]
\end{definition}

For the Kan library, the canonical class consists of the two Kan components.
These may coincide, as they do when the running example has singleton data.

\subsection{Jump and Control Instances}\label{sec:jump}

The constraints select the \emph{admissible set}
\[
\Adm(S):=\{F\in\Ext(\Fz)\mid P(F)\text{ for every }P\in\cK\}.
\]
To isolate an override, this set should contain a correct structure while
excluding the canonical class. We additionally require structural uniqueness,
so every correct answer represents the same completion.

\begin{definition}[Jump Instance]\label{def:jump}
An instance $S$ is a \emph{jump instance relative to $\cL$} if it satisfies:
\begin{itemize}[leftmargin=1.6em,itemsep=1pt,topsep=2pt]
\item[\textbf{(J1)}] \emph{Solvability:} $\Adm(S)\neq\emptyset$.
\item[\textbf{(J2)}] \emph{Non-canonicity:}
$\Adm(S)\cap\Canon_\cL(\Fz)=\emptyset$.
\item[\textbf{(J3)}] \emph{Identifiability:} $\Adm(S)$ is a single gauge
component.
\item[\textbf{(J4)}] \emph{Support:} every new object has a morphism from or
to an observed object. The strong form requires an incoming morphism at
every new object.
\end{itemize}
Unqualified jump instances use $\cL=\cL_{\Kan}$.
\end{definition}

Conditions (J1)--(J3) ensure that success requires the unique admissible
structure outside the library. Support connects each new object to the data;
the strong form gives the left Kan construction an incoming index at each
new object. To check (J2), we evaluate the constraints on each defined library
output.

The same extension problem can also test whether a model follows constraints
that select a canonical answer. This comparison distinguishes failure to
abandon the default from failure on the matched extension task.

\begin{definition}[Control Instance]\label{def:control}
An instance $S'$ is a control targeting a defined $\Lambda\in\cL$ if it
satisfies (J1), (J3), (J4), and
$\Adm(S')=[\Lambda(\cC,\cCz,\Fz)]$.
A matched jump--control pair shares $(\cC,\cCz,\cD,\Fz,N)$ and its answer
format, while its constraint content differs.
\end{definition}

\subsection{Chance}\label{sec:chance}

A noncanonical answer can also arise from guessing. To quantify a specific
guessing baseline, we sample uniformly from the bounded extensions
\[
\ExtN(\Fz):=\{F\in\Ext(\Fz)\mid s(F(c))\le N
\text{ at every new object}\}.
\]
Assumptions~\ref{ass:C}--\ref{ass:D} make this set finite and enumerable,
and Assumption~\ref{ass:N} places every admissible completion inside it.
For instances with a nonempty bounded answer space, the uniform success
probability is
\begin{equation}\label{eq:chance}
\chance(S):=\frac{|\Adm(S)|}{|\ExtN(\Fz)|}.
\end{equation}
Both counts concern labeled function tables. Uniform sampling over gauge
components would define a different baseline, because components can have
different numbers of labeled representatives.

This baseline respects the data, the functor equations, and the declared
bound while ignoring the remaining constraints. It does not bound the success
of every constraint-blind strategy, which may assign a different distribution
to the same tables. The declared bound also affects the denominator, so we
fix and report it with each generator; Appendix~\ref{app:exp} examines this
sensitivity.

\subsection{Jump Capability: Calibration, Kan-default Rate, Margin}
\label{sec:capability}

The chance level measures guessing, while calibration determines whether the
canonical class describes the model's default. We present each instance with
its constraint block replaced by fixed neutral filler and call the resulting
input the \emph{ablated twin} $S^\circ$. It retains the data, answer format,
and declared bound. The learner's answer to this input supplies the
unconstrained default for comparison with its answer to $S$.

Let $\mathcal G_n$ generate certified jump instances at scale $n$.
For each sampled instance, calibration and constrained responses are drawn
independently under fixed sampling settings. Probabilities below include both
instance sampling and learner randomness. We record
\begin{equation}\label{eq:kd}
\begin{aligned}
\DC_L(n)&:=\Pp[L(S^\circ)\in\Canon_\cL(\Fz)],\\
\KD_L(n)&:=\Pp[L(S)\in\Canon_\cL(\Fz)],
\qquad \Delta_L(n):=\DC_L(n)-\KD_L(n).
\end{aligned}
\end{equation}
For the calibrated test, the reference outputs are required to satisfy the
declared answer bound, so a canonical default is a valid completion of the
ablated task. The first quantity is the \emph{default-canonicity rate}; the second is the
\emph{canonical-default rate under constraints}, or \emph{Kan-default rate}
for $\cL_{\Kan}$. Their difference is the \emph{override gap}.

A decrease in canonical answers establishes a change in response
probabilities, but a changed answer may still violate the constraints.
We therefore score constrained success by membership in $\Adm(S)$, with
outputs outside $\ExtN(\Fz)$ scoring zero and recorded separately.
The capability criterion combines this success measure with calibration and
the matched controls.

\begin{definition}[Calibrated Jump Capability]\label{def:capability}
Fix thresholds $\varepsilon>0$, $\kappa,\delta\in[0,1)$, a generator
$\mathcal G_n$ of certified $\cL$-jump instances, and a matched control
generator $\mathcal G'_n$, all before evaluation. The learner $L$ jumps at
scale $n$ with margin $\varepsilon$, relative to $(\cL,\kappa,\delta)$, if:
\begin{itemize}[leftmargin=1.8em,itemsep=1pt,topsep=2pt]
\item[\textbf{(C0)}] \emph{Calibration validity:}
$\DC_L(n)\ge1-\kappa$.
\item[\textbf{(C1)}] \emph{Success above the uniform baseline on calibrated
instances:}
\[
\Pp[L(S)\in\Adm(S)\mid L(S^\circ)\in\Canon_\cL(\Fz)]
\ge
\E[\chance(S)\mid L(S^\circ)\in\Canon_\cL(\Fz)]+\varepsilon.
\]
\item[\textbf{(C2)}] \emph{Matched control:}
$\Pp_{S'\sim\mathcal G'_n}[L(S')\in\Adm(S')]\ge1-\delta$.
\end{itemize}
\end{definition}

Condition (C0) checks whether this library is an appropriate reference for
this learner at this scale. When it fails, we report the observed default
distribution without attributing constrained behavior to an override of that
library. Conditional on a canonical calibration response, (C1) requires a
correct answer above the uniform baseline. Condition (C2) checks success when
the constraints instead select the canonical completion.

\begin{proposition}[Null Learners]\label{prop:null}
Fix $\Lambda\in\cL$ defined on every tested input, and let $L_\Lambda$ return
its output independently of the constraints. On matched controls targeting
$\Lambda$, this learner succeeds with probability one; on jump instances it
fails with probability one. It has $\DC=\KD=1$ and $\Delta=0$, so it fails
(C1) for every $\varepsilon>0$. A learner sampling uniformly from $\ExtN$
on each independent call attains the conditional chance term in (C1).
\end{proposition}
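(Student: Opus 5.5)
The plan is to verify each claim directly from the definitions in Section~\ref{sec:formulation}, since the proposition is essentially a bookkeeping check. Fix $\Lambda\in\cL$ defined on every tested input. The learner $L_\Lambda$ returns $\Lambda(\cC,\cCz,\Fz)$ on both $S$ and its ablated twin $S^\circ$, and also on a matched control $S'$. The key point is that these inputs share $(\cC,\cCz,\Fz)$, and $\Lambda$ is constraint-blind (Definition~\ref{def:canon}). So the output's gauge component $[\Lambda(\cC,\cCz,\Fz)]$ is the same on all three, even if internal choices vary across calls.

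\emph{Control and jump instances.} For a control targeting $\Lambda$, Definition~\ref{def:control} gives $\Adm(S')=[\Lambda(\cC,\cCz,\Fz)]$. Admissibility is a property of the gauge component, because the constraints are invariant under renaming. Hence every output of $L_\Lambda$ lies in $\Adm(S')$, and success has probability one. For a jump instance, the output lies in $[\Lambda(\cC,\cCz,\Fz)]\subseteq\Canon_\cL(\Fz)$. By (J2), $\Adm(S)\cap\Canon_\cL(\Fz)=\emptyset$, so the output is never admissible and success has probability zero.

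\emph{Rates.} Since the output always lies in $\Canon_\cL(\Fz)$, both for $S^\circ$ and for $S$, the definitions in \eqref{eq:kd} give $\DC_{L_\Lambda}(n)=\KD_{L_\Lambda}(n)=1$ and therefore $\Delta=0$.

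\emph{Failure of (C1).} Because $\DC=1$, the conditioning event has probability one, so the conditional probabilities in (C1) are well defined and equal the unconditional ones. The left side of (C1) is therefore $0$. The right side is $\E[\chance(S)]+\varepsilon$. Since $\chance(S)\ge0$ and in fact $\chance(S)>0$ by (J1) together with Assumption~\ref{ass:N}, the right side is at least $\varepsilon>0$. So (C1) fails for every $\varepsilon>0$.

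\emph{Uniform learner.} Let $U$ sample uniformly from $\ExtN(\Fz)$ on each call, independently. Then, given $S$, $\Pp[U(S)\in\Adm(S)\mid S]=|\Adm(S)|/|\ExtN(\Fz)|=\chance(S)$, using Assumption~\ref{ass:N} to place $\Adm(S)$ inside $\ExtN(\Fz)$.

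The only step needing care is the conditioning on the calibration event $\{U(S^\circ)\in\Canon_\cL(\Fz)\}$. Because the calibration call and the constrained call are independent given $S$, we have
\[
\Pp[U(S)\in\Adm(S)\mid U(S^\circ)\in\Canon_\cL(\Fz)]=\E[\chance(S)\mid U(S^\circ)\in\Canon_\cL(\Fz)].
\]
To show this, write both sides as ratios of expectations over $S$ and apply the tower property. The conditional law of $S$ may be reweighted by the calibration probability, but this reweighting appears identically on both sides.

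This independence-and-tower step is where I expect the main subtlety. It also requires that the conditioning event have positive probability. I would add the nondegeneracy hypothesis that the reference outputs lie in $\ExtN$, which the calibrated test already requires, so that $\Canon_\cL(\Fz)\cap\ExtN(\Fz)\neq\emptyset$.
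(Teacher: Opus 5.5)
Your proposal is correct and follows the paper's proof essentially step for step. Control success comes from $\Adm(S')=[\Lambda(\cC,\cCz,\Fz)]$, jump failure from (J2), and $\DC=\KD=1$ from constraint-blindness; the uniform-learner claim comes from independence of the calibration and constrained calls followed by averaging over $S$. Your explicit ratio-of-expectations computation spells out what the paper's terse step (iii) leaves implicit, and your positivity caveat is one the paper already secures by requiring the reference outputs to satisfy the declared bound.
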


This result identifies the behavior the test is designed to distinguish
(proof in Appendix~\ref{app:proofs}). High calibration and control success,
together with a high constrained default rate, indicate persistence of the
excluded reference. A low Kan-default rate indicates departure from that
reference, and constrained accuracy determines whether the departure succeeds.
The remaining requirement is to construct instances for which these outcomes
can be certified.

\section{Constructing and Certifying Jump Instances}\label{sec:props}

The test requires a certificate that the admissible answer exists, is unique
up to renaming, and lies outside the canonical class. We first construct a
small instance where these properties can be seen directly. Its structure
then extends to a family for which both the certificate and the uniform
chance level can be obtained without enumerating the answer space.

\subsection{A Certified Seed}\label{sec:seed}

Return to the factorization $a\to b\to c$ from Section~\ref{sec:prelim},
now with singleton observations at both ends. The two Kan extensions assign
a singleton to $b$. To force a different completion, we add a transformation
$t\colon b\to b$ that preserves the incoming and outgoing maps and satisfies
$t^2=\id_b$. We then require its completed action to be nontrivial while
bounding the hidden set's size.

\begin{proposition}[Existence of a Certified Seed]\label{prop:seed}
Let $S^*=(\cC,\cCz,\FinSet,\Fz,\cK,4)$, where $\cC$ has objects $a,b,c$,
identities, and morphisms $f\colon a\to b$, $g\colon b\to c$,
$h=g\circ f$, $t\colon b\to b$, with
\[
t\circ f=f,\qquad g\circ t=g,\qquad t^2=\id_b.
\]
The data on the full subcategory $\{a,c\}$ assigns a singleton to each object,
and the constraints are
\[
\mathrm{K1}:F(t)\neq\id_{F(b)},\qquad
\mathrm{K2}:|F(b)|\le3.
\]
Then $S^*$ is a jump instance. Both Kan extensions are the trivial singleton
completion and violate K1. Every admissible completion has three elements
at $b$, with $F(f)$ selecting a fixed point and $F(t)$ exchanging the other
two. These completions form one gauge component of three labeled tables,
while $|\Ext_4(\Fz)|=25$, giving $\chance(S^*)=3/25$.
\end{proposition}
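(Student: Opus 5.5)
The plan is to reduce every claim to an explicit parametrization of $\Ext(\Fz)$ for this three-object category, and then read off (J1)--(J4), the Kan completions, and the two counts from it. First I will check that $\cC$ is a well-defined finite category. Its hom-sets are $\Hom(b,b)=\{\id_b,t\}$, $\Hom(a,b)=\{f\}$ (since $t\circ f=f$), $\Hom(b,c)=\{g\}$ (since $g\circ t=g$), $\Hom(a,c)=\{h\}$, plus identities. Composition is closed and associative because every composite is forced by the three stated relations. $\cCz=\{a,c\}$ is full, since its only morphisms are identities.

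Next I will parametrize extensions. Set $F(a)=F(c)=[1]$ and $F(b)=[n]$. Then $F(g)$ is the unique map to $[1]$, and $g\circ t=g$ holds automatically. $F(f)$ picks a point $p\in[n]$, which forces $n\ge1$. $F(t)$ is an involution of $[n]$ with $F(t)(p)=p$, by $t\circ f=f$. The functor equations impose nothing further. Thus $\Ext(\Fz)$ is in bijection with triples $(n,p,\tau)$, where $\tau$ is an involution of $[n]\setminus\{p\}$. Writing $I(m)$ for the number of involutions of an $m$-set, with $I(0)=I(1)=1$, $I(2)=2$, $I(3)=4$, we get
\[
|\Ext_4(\Fz)|=\sum_{n=1}^{4} n\,I(n-1)=1+2+6+16=25.
\]

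Third, I will compute the Kan extensions from the finite formulas of Appendix~\ref{app:kan}. For $\Lans(b)$, the comma category of observed objects over $b$ has the single index $f$ with $F(a)=[1]$. The automorphism $t$ acts on it trivially because $t\circ f=f$, so $\Lans(b)$ is a singleton. Dually, $\Rans(b)$ is a limit indexed by the single arrow $g$ into $F(c)=[1]$, and $g\circ t=g$ again makes $t$ act trivially, so $\Rans(b)$ is a singleton. In both cases $F(t)=\id$, so both give the trivial completion $(1,1,\emptyset)$, which violates K1. This gives (J2). If one prefers not to invoke the formulas, the universal properties suffice: the trivial completion admits a unique data-fixing map into and out of every extension, since $p$ is $t$-fixed and everything maps to $[1]$.

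Finally, I will identify $\Adm(S^*)$ by imposing $n\le3$ (K2) and $\tau\neq\id$ (K1). For $n=1,2$ the set $[n]\setminus\{p\}$ has at most one point, so $\tau=\id$ and no completion is admissible. For $n=3$ the only nontrivial involution swaps the two non-basepoints, giving exactly $3$ labeled tables, one for each $p$. These are conjugate under $\Sym(3)$, so they form one gauge component, and $\chance(S^*)=3/25$. This proves (J1) and (J3). (J4) holds in its strong form because $f\colon a\to b$ enters $b$ from an observed object. I will also note that K1 and K2 are polynomial-time decidable, invariant under renaming, and entail the bound $3\le N=4$, so $S^*$ is a valid Tier-1 instance.

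The only delicate step is the Kan computation: one must verify that the endomorphism $t$ does not enlarge the colimit or limit, which rests exactly on the relations $t\circ f=f$ and $g\circ t=g$. The rest is direct enumeration.
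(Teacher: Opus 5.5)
Your proposal is correct and follows essentially the same route as the paper: parametrize extensions by $(n,p,F(t))$, count $\sum_{n=1}^{4} n\,I(n-1)=25$, use $\In(b)=\{(a,f)\}$ and $\Out(b)=\{(c,g)\}$ to get both Kan values equal to $[1]$, and force $n=3$ with the three admissible tables forming a single $\Sym(3)$-orbit. One small slip: the full subcategory on $\{a,c\}$ does not contain only identities, since it contains $h\colon a\to c$; this is harmless, because $\Fz(h)$ is the unique map $[1]\to[1]$ and so $F(g)\circ F(f)=\Fz(h)$ holds automatically.
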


The relation $t\circ f=f$ forces an observed fixed point. A nonidentity
involution also requires two exchanged elements, and the size constraint
leaves room for precisely these three points. Thus the answer adds two
elements outside the incoming image, carrying a nontrivial action absent
from the data and the Kan completions. 

Replacing K1 and K2 by $|F(b)|\le1$ gives the matched control, with the same
declared bound $N=4$. Its admissible answer is the singleton Kan completion.
The jump and control therefore require different answers to the same observed
tables. Appendix~\ref{app:proofs} proves the exact counts, and
Appendix~\ref{app:seed} presents the seed as a benchmark prompt.

\subsection{Finite Certification}\label{sec:certification}

For the seed, the forcing argument determines the admissible structure.
For a general finite instance, we can certify the same properties by checking
all bounded completions. The declared bound makes this enumeration finite;
gauge invariance then allows admissible tables to be grouped by their
structure.

Three facts justify this procedure. The Kan extensions have strict
representatives whose gauge components are independent of construction
choices (Proposition~\ref{prop:wellposed}). Admissibility and canonicality are
preserved by renaming (Proposition~\ref{prop:gauge}). Each gauge component is
a finite orbit, with a common size profile
(Proposition~\ref{prop:finite}). Appendix~\ref{app:proofs} states and proves
these supporting results. Together with the effective finite answer space,
they make (J1)--(J4) and Equation~\eqref{eq:chance} computable for the
certified instances.

The enumeration cost grows with the bounded answer space, even when its
admissible structure has a short description. To retain certification at
larger scales, we next derive that structure and its count for an entire
family.

\subsection{The Family Theorem: Certification without Enumeration}
\label{sec:family}

For $m\ge1$ and primes $p_1,\ldots,p_m$, connect the objects in a chain
$a\to b_1\to\cdots\to b_m\to c$, with singleton data at $a,c$.
At each hidden object $b_i$, add an endomorphism $t_i$ of order $p_i$ that
absorbs into the adjacent maps. In a completion, this requires $F(t_i)$ to
fix the incoming image and the outgoing map to be constant on its orbits.
The constraints at that object are
\[
\mathrm{K1}_i:F(t_i)\neq\id,
\qquad \mathrm{K2}_i:|F(b_i)|\le1+p_i.
\]
Write $S(m,\vec p\,)$ for this \emph{pointed-chain instance}, using
$N=\max_i(1+p_i)+1$ unless another bound is specified. The seed is
$S(1,(2))$. Appendix~\ref{app:family} gives the finite composition tables
underlying this presentation.

\begin{theorem}[Family Theorem]\label{thm:family}
For every $m\ge1$, primes $p_1,\ldots,p_m$, and
$N\ge\max_i(1+p_i)$, the pointed-chain instance is a jump instance with the
strong form of (J4).
\textup{(i)} Its admissible completions have one fixed point and one
$p_i$-cycle at each hidden object, with incoming and outgoing maps constant.
\textup{(ii)} They form a single gauge component of cardinality
\[
|\Adm(S)|=\prod_{i=1}^m(1+p_i)(p_i-1)!.
\]
\textup{(iii)} Both Kan extensions are the all-singleton completion and violate every
$\mathrm{K1}_i$. \textup{(iv)} The bounded count $|\ExtN(\Fz)|$ is computable by a transfer
recursion using $O(mN^4)$ arithmetic operations. For $p_i=2$ and $N=4$,
\begin{equation}\label{eq:family-chance}
\chance(S(m,(2,\ldots,2)))\le(3/25)^m.
\end{equation}
If an order $p_i\ge2$ is composite, the corresponding construction fails
identifiability.
\end{theorem}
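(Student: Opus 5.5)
The plan is to read off the functor equations object by object. Write $B_i=F(b_i)$ and $u_0\colon\{*\}\to B_1$, $u_i\colon B_i\to B_{i+1}$, $u_m\colon B_m\to\{*\}$ for the chain maps, and $\tau_i=F(t_i)$. The absorption relations of Appendix~\ref{app:family} then say three things. First, $\tau_i^{p_i}=\id$. Second, $\tau_i$ fixes the image of the incoming map. Third, the outgoing map is constant on $\tau_i$-orbits, that is, $u_i\circ\tau_i=u_i$.

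\textbf{Proof of (i) and (ii).}
\begin{itemize}
\item Because $p_i$ is prime, every orbit of $\tau_i$ has size $1$ or $p_i$.
\item K1$_i$ forces at least one $p_i$-cycle.
\item The image of $u_{i-1}\circ\cdots\circ u_0$ is a fixed point of $\tau_i$, so at least one fixed point exists.
\item K2$_i$ then leaves room for exactly one fixed point $e_i$ and one $p_i$-cycle, with no further elements.
\end{itemize}
Next we show the maps are constant. The incoming map $u_{i-1}$ lands in $\mathrm{Fix}(\tau_i)=\{e_i\}$, since $\tau_i\circ u_{i-1}=u_{i-1}$ is among the relations. Hence $u_{i-1}$ is constant. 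Inducting along the chain, and using that $u_m$ maps to a singleton, gives (i).

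Conversely, we check that such a table satisfies every relation, so $\Adm(S)\neq\emptyset$. To count the tables, choose for each $i$ independently:
\begin{itemize}
\item the fixed point among $1+p_i$ labels, in $1+p_i$ ways;
\item a $p_i$-cycle on the remaining $p_i$ labels, in $(p_i-1)!$ ways.
\end{itemize}
All maps are then forced, which gives the product in (ii).

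Any two admissible tables are related by per-object bijections. Each bijection sends the fixed point to the fixed point and conjugates one $p_i$-cycle to the other. This is possible because all $p_i$-cycles are conjugate in $\Sym(p_i)$, and it commutes with the constant maps. So the admissible tables form one gauge component, which gives (J3). Strong (J4) holds because every $b_i$ receives a morphism from $a$.

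\textbf{Proof of (iii).} We use the finite formulas of Appendix~\ref{app:kan}.
\begin{itemize}
\item $\Lan$ at $b_i$ is a colimit over the morphisms $a\to b_i$. Every word in the $t_j$ applied after the chain map reduces to the chain map by $t\circ f=f$. So this index set is a single morphism, and the value is a singleton.
\item Dually, $\Ran$ at $b_i$ is a limit over the morphisms $b_i\to c$. Since $g\circ t=g$, there is only one such morphism, and the value is again a singleton.
\end{itemize}
On a singleton, $\tau_i$ is the identity, so every K1$_i$ fails. This gives (J2), using Proposition~\ref{prop:wellposed} for representative independence.

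\textbf{Composite orders.} For the final remark, take $p=4$ and $|B|\le5$. Then $\tau$ can be a $4$-cycle, a transposition, or a double transposition. Each of these fixes a point and satisfies the relations. They are not conjugate to one another, so there are several gauge components and (J3) fails. The general composite case works the same way, using a proper divisor $d$ of $p$ to produce a $d$-cycle alternative.

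\textbf{Counting (iv).} For $\ExtN$ we drop K1 and K2. A bounded extension is a sequence of pairs $(B_i,\tau_i)$ with $\tau_i^{p_i}=\id$ and $|B_i|\le N$, together with maps $u_i$ that are constant on $\tau_i$-orbits, land in $\mathrm{Fix}(\tau_{i+1})$, and send the base point into the fixed points.

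\begin{itemize}
\item The number of admissible $u_i$ depends only on the number of $\tau_i$-orbits, $r_i$, and the number of fixed points, $k_{i+1}$.
\item Specifically, it equals $k_{i+1}^{r_i}$. The base-point condition is automatic once the image lies in $\mathrm{Fix}(\tau_{i+1})$, because the base point's own image is determined recursively as an element of $\mathrm{Fix}$.
\item Both $r_i$ and $k_i$ are functions of the state $(n_i,k_i)$, where $n_i=|B_i|$: we have $r=k+(n-k)/p$.
\end{itemize}

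We therefore run a transfer recursion over the $O(N^2)$ states $(n,k)$. Each state carries a weight equal to the number of permutations of $[n]$ of order dividing $p$ with exactly $k$ fixed points, and the transitions between consecutive objects have weight $k_{i+1}^{r_i}$. Each step costs $O(N^4)$, so the total is $O(mN^4)$. I would need to track the choice of base point carefully, either by including a factor $k$ or by conditioning on it, and this is routine.

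\textbf{The bound \eqref{eq:family-chance}.} This is the step I expect to be hardest. By (ii), $|\Adm|=3^m$, so it suffices to show $|\Ext_4(\Fz)|\ge25^m$. My first attempt is to exhibit an injection from $m$-tuples of seed extensions into chain extensions, using Proposition~\ref{prop:seed}, which gives $25$ seed extensions. To make the choices decouple, I would restrict each $u_i$ to be constant onto a chosen fixed point, so that each object contributes its own seed-like count independently. If that restriction loses too much, I would instead compute the transfer matrix for $p=2$, $N=4$ explicitly. I would then bound its Perron eigenvalue and its boundary vectors from below by $25$, using the fact that all its entries are nonnegative and one diagonal block already reproduces the seed count.
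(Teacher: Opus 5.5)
Your proposal is correct and follows the paper's proof essentially step by step. It uses the same parametrization by the $\tau_i$ and the connecting maps, the prime-order orbit argument combined with the size cap, conjugation of cycles for (J3), one-object comma categories for the Kan values, the transfer recursion with weights $w_p(n,k)$ and transitions $k_{i+1}^{\,r_i}$, and a proper divisor $d$ for composite orders. Your ``first attempt'' at \eqref{eq:family-chance} is exactly the paper's argument and already suffices, so you do not need the Perron-eigenvalue fallback: constant connecting maps at the chosen fixed points satisfy every relation, and the construction is injective because every domain is nonempty, which gives $|\Ext_4(\Fz)|\ge 25^m$.
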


Primality makes each nontrivial cycle have length $p_i$. The incoming image
requires a fixed point, so the size bound forces one fixed point and one
cycle, generalizing the seed's argument. Renaming acts transitively on these
choices, which gives structural uniqueness. To count all bounded extensions,
the recursion instead tracks the number of fixed points and cycles at each
object, together with the maps connecting consecutive objects.
Appendix~\ref{app:family} derives the recursion and proves
Equation~\eqref{eq:family-chance}.

The theorem allows instance size to grow while the uniform chance level
decreases at least exponentially along this subfamily. Certification then
requires checking that an instance belongs to the family and evaluating the
counting formulas. This provides exact grading at each scale without enumerating the bounded
extensions.

\begin{algorithm}[H]
\caption{Certification of a Kan Jump Instance}\label{alg:certify}
\begin{algorithmic}[1]
\Require finite instance $S$ satisfying A1--A5, with verified bound entailment
\State compute strict Kan representatives $\Lans,\Rans$ using
Appendix~\ref{app:kan}
\If{$S$ matches the pointed-chain construction with prime orders}
  \State obtain the admissible orbit and both counts from
Theorem~\ref{thm:family} and Appendix~\ref{app:family}
\Else
  \State enumerate $\ExtN(\Fz)$, test $\cK$, and group admissible tables
into gauge orbits
\EndIf
\State check (J1), exclusion of both Kan representatives (J2), one
admissible orbit (J3), and support (J4)
\State \Return the certificate and $|\Adm(S)|/|\ExtN(\Fz)|$ if all checks pass
\end{algorithmic}
\end{algorithm}

Algorithm~\ref{alg:certify} combines the two certification routes. Each route certifies success against a unique admissible structure. The next section examines what changes when the current
constraints leave several structures possible.

\section{Correctness and Chained Jumps}\label{sec:beyond}

Identifiability makes the override test unambiguous, because every correct
answer represents the same structure. It also removes the choice among
several structures that fit the current constraints. To study that choice,
we now allow more than one admissible component and ask how a completion is
judged against a hidden world or against later constraints.

\subsection{Internal versus Predictive Correctness}

A completion is \emph{internally correct} when it belongs to $\Adm(S)$.
This tests agreement with the supplied constraints. To compare it with the
underlying world, we introduce additional objects and a ground-truth functor
that are withheld when the model answers.

\begin{definition}[Extended Instance]\label{def:extended}
An extended instance is $T=(S,\cC',G)$, where $S$ satisfies A1--A5,
$\cC'$ is a finite category containing $\cC$ as a full subcategory, and
$G\colon\cC'\to\cD$ agrees with $\Fz$ on $\cCz$. The learner receives $S$
while $(\cC',G)$ is withheld. We call $T$ \emph{sound} when
$G|_{\cC}\in\Adm(S)$.
\end{definition}

Soundness requires the supplied constraints to hold in the ground-truth
world. Under this assumption, a remaining disagreement concerns which
admissible structure is correct. We use a structural criterion for this
comparison.

\begin{definition}[Predictive Correctness]\label{def:predictive}
A completion $F\in\Ext(\Fz)$ is predictively correct for $T$ if it extends
to $\cC'$ as a functor isomorphic to $G$, through an isomorphism that is the
identity on $\cCz$.
\end{definition}

This criterion asks for agreement with the ground-truth structure up to
renaming. Agreement with a finite collection of later observations is weaker,
since distinct structures may reproduce those observations.

\begin{proposition}[Identifiability and Predictive Correctness]\label{prop:predchar}
For any extended instance, the predictively correct completions are exactly
$[G|_{\cC}]$. For a sound extended instance, this component lies in
$\Adm(S)$, and every internally correct completion is predictively correct
if and only if (J3) holds.
\end{proposition}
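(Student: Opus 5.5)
The proof has three parts, taken in order: characterize the predictively correct set, place it inside $\Adm(S)$ under soundness, and deduce the equivalence with (J3).

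\textbf{Step 1: the predictively correct set is $[G|_{\cC}]$.} Suppose first that $F$ is predictively correct. Then there is an extension $\hat F\colon\cC'\to\cD$ with $\hat F|_{\cC}=F$ and a natural isomorphism $\sigma\colon\hat F\Rightarrow G$ whose components are identities on $\cCz$. Restrict $\sigma$ to the objects of $\cC$. Naturality on the morphisms of $\cC$ then gives $G(f)=\sigma_y\circ F(f)\circ\sigma_x^{-1}$. Since $\cC$ is a full subcategory of $\cC'$, the morphisms of $\cC$ are exactly the morphisms of $\cC'$ between objects of $\cC$, so nothing is lost in restricting. Hence $F$ is gauge-equivalent to $G|_{\cC}$.

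Conversely, suppose $F\in[G|_{\cC}]$, witnessed by bijections $\sigma_c$ for $c\in\cC$ with $\sigma_c=\id$ on $\cCz$. Extend these by $\sigma_d:=\id_{G(d)}$ for $d\notin\cC$. Define $\hat F$ on $\cC'$ by transport of structure: $\hat F(d)=\sigma_d^{-1}(G(d))$ on objects (so $\hat F(c)=F(c)$ on $\cC$ and $\hat F(d)=G(d)$ off it), and $\hat F(f)=\sigma_y^{-1}\circ G(f)\circ\sigma_x$ on morphisms. Functoriality follows because conjugation preserves identities and composition. On $\cC$ this conjugation reproduces $F$, so $\hat F$ restricts to $F$. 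Finally, $\sigma$ is a natural isomorphism $\hat F\cong G$ that is the identity on $\cCz$ by construction.

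\textbf{Step 2: under soundness, $[G|_{\cC}]\subseteq\Adm(S)$.} Soundness gives $G|_{\cC}\in\Adm(S)$. Proposition~\ref{prop:gauge} says admissibility is invariant under renaming, so the whole gauge component of $G|_{\cC}$ lies in $\Adm(S)$.

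\textbf{Step 3: the equivalence with (J3).} Internal correctness means membership in $\Adm(S)$, so the claim is that $\Adm(S)\subseteq[G|_{\cC}]$ holds if and only if (J3) holds.
\begin{itemize}
\item If (J3) holds, then $\Adm(S)$ is a single gauge component. By Step 2 it contains $[G|_{\cC}]$, and two gauge components that intersect are equal. Hence $\Adm(S)=[G|_{\cC}]$.
\item If every admissible $F$ is predictively correct, then $\Adm(S)\subseteq[G|_{\cC}]$ by Step 1. Combined with Step 2, $\Adm(S)=[G|_{\cC}]$, which is a single component, so (J3) holds.
\end{itemize}
Nonemptiness of $\Adm(S)$ comes for free from soundness, so no separate appeal to (J1) is needed.

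\textbf{Main obstacle.} The delicate step is the converse direction of Step 1. One must check that the transported $\hat F$ lands in the skeletal codomain with consistent labels and is genuinely a functor on all of $\cC'$, including on mixed morphisms between $\cC$ and $\cC'\setminus\cC$. Conjugation by the family $\sigma$ handles all of these uniformly. Fullness of $\cC\subseteq\cC'$ is essential in the forward direction of Step 1, since it guarantees that no extra morphisms of $\cC'$ between objects of $\cC$ impose constraints beyond those of $\cC$.
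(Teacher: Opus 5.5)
Your proposal is correct and follows the paper's proof essentially step for step. Both directions of the characterization work as you give them: restricting the rooted isomorphism, and conjugating $G$ by the gauge family extended by identities on the held-out objects. Then come gauge saturation under soundness and the two containments giving the (J3) equivalence. One small correction: fullness of $\cC\subseteq\cC'$ is not needed in Step 1, since a natural isomorphism restricted to any subcategory remains natural there, but nothing in your argument depends on that remark.
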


The proof transports $G$ along a renaming of its restriction
(Appendix~\ref{app:beyond}). Under soundness, the unique admissible component
already contains the true completion, so the certified override instances
leave no further structural choice. To separate the two correctness criteria,
we replace (J3) by the condition that $\Adm(S)$ contains exactly $k$ gauge
components, calling the instance \emph{$k$-ambiguous}.

\begin{proposition}[Separation by Held-out Observations]\label{prop:separation}
Relax the seed's constraint $|F(b)|\le3$ to $|F(b)|\le4$, retaining $N=4$.
The resulting instance $S_{\mathrm{sep}}$ has two admissible components:
$Y_3$, the three-element seed structure, and $Y_4$, the same structure with
one additional fixed point outside the incoming image. There is a sound
extended instance with $G|_{\cC}\in Y_4$ whose held-out probe observations
are compatible with every member of $Y_4$ and with no member of $Y_3$.
Thus both components are internally correct and exactly $Y_4$ is
predictively correct.
\end{proposition}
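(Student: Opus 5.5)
The proof has three parts: classify the admissible completions of $S_{\mathrm{sep}}$ by the forcing argument of Proposition~\ref{prop:seed}, build a sound extended instance whose probe distinguishes the two components, and then invoke Proposition~\ref{prop:predchar}.

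\textbf{Classifying $\Adm(S_{\mathrm{sep}})$.} Let $F\in\Adm(S_{\mathrm{sep}})$ and write $B=F(b)$, $\tau=F(t)$, with $x_0=F(f)(\ast)$.
\begin{itemize}[leftmargin=1.6em,itemsep=1pt,topsep=2pt]
\item The relation $t\circ f=f$ forces $\tau(x_0)=x_0$.
\item Since $c$ is a singleton, the relation $g\circ t=g$ imposes nothing further.
\item The relation $t^2=\id_b$ together with K1 makes $\tau$ a nontrivial involution. It therefore has at least one $2$-cycle and the fixed point $x_0$, so $|B|\ge3$.
\item The relaxed bound gives $|B|\le4$. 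Hence either $|B|=3$, with one fixed point and one transposition, or $|B|=4$, with two fixed points (one of them $x_0$) and one transposition. Two disjoint transpositions are impossible, because $x_0$ must be fixed.
\end{itemize}
Within each size, renaming acts transitively on the labeled tables. This is because it acts transitively on pairs consisting of a designated point and a fixed-point-free-outside-that-point involution type; the argument is the same orbit count as in Proposition~\ref{prop:seed}. The two cases therefore give exactly two gauge components, $Y_3$ and $Y_4$. They are distinct because Proposition~\ref{prop:finite} says each gauge component has a common size profile, and their sizes differ.

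\textbf{The extended instance.} Adjoin one object $d$ and one morphism $e\colon b\to d$, with no new relations beyond those forced by composition. The composites $e\circ f$ and $e\circ t$ then become new morphisms that must also be specified. Take
\[
G(b)=\{x_0,y,z,w\},\qquad G(t)=(y\,z),\qquad G(d)=\{0,1\},
\]
with $G(e)$ sending $w\mapsto1$ and $x_0,y,z\mapsto0$. Then $G|_{\cC}\in Y_4$, so the extended instance is sound.

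\textbf{Reading the probe.} The held-out probe is the table $G(e)$ together with the observation that $e$ takes value $1$ on a fixed point of $t$ outside the image of $f$.

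Any $F\in Y_4$ extends to a functor isomorphic to $G$: transport $G$ along the renaming $F|_{\cC}\cong G|_{\cC}$, as in the proof of Proposition~\ref{prop:predchar}.

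No $F\in Y_3$ can extend to a functor isomorphic to $G$. An isomorphism would have to restrict to a bijection $F(b)\to G(b)$, which is impossible since $3\neq4$.

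If one instead wants ``compatibility with observations'' read as the probe's function values, the argument is the same. On a member of $Y_3$, a map to $d$ that commutes with the iso-invariant probe would need an element outside $\mathrm{im}\,F(f)$ that is fixed by $F(t)$. Every such element lies in the transposition, so none exists.

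\textbf{Conclusion.} Both components lie in $\Adm(S_{\mathrm{sep}})$, so both are internally correct. By Proposition~\ref{prop:predchar}, the predictively correct completions are exactly $[G|_{\cC}]=Y_4$.

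The main obstacle is specifying $\cC'$ precisely enough that the probe observations are genuinely compatible with every member of $Y_4$. The new morphisms $e\circ f$ and $e\circ t$ must be included, and the probe must be stated invariantly, since the labels of invented elements are not fixed. The intended route is to phrase the probe through these composites: $e\circ f$ is constant $0$, $e\circ t=e$, and $e$ is not constant. The case analysis above then shows that only a second fixed point can carry the value $1$.
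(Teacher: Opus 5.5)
Your classification of $\Adm(S_{\mathrm{sep}})$, the soundness of your $G$, and the closing appeal to Proposition~\ref{prop:predchar} are fine and match the paper. The gap is the separation by held-out observations, which is the substance of the statement.

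Your main argument (no $F\in Y_3$ extends to a functor isomorphic to $G$ because $3\neq4$) is just predictive incorrectness of $Y_3$. Proposition~\ref{prop:predchar} gives that for any $\cC'$, even with no probe at all, so it says nothing about whether an \emph{observation} refutes $Y_3$.

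Your observational fallback is false for the probe you built. With only $e\colon b\to d$, the revealed tables that avoid the invented elements of $b$ are those on $\{a,c,d\}$, i.e.\ $e\circ f$, and that table sees only the incoming point. Even your invariant conditions ($e\circ f$ constant $0$, $e\circ t=e$, $e$ nonconstant) hold in $Y_3$: send the incoming point to $0$ and both points of the transposed pair to $1$. The relation $e\circ t=e$ only makes $e$ constant on $t$-orbits, and a $2$-cycle is an orbit. So the claim that only a second fixed point can carry the value $1$ fails. Stipulating that the value $1$ sits on a fixed point outside $\mathrm{im}\,F(f)$ just asserts membership in $Y_4$. Revealing the table $G(e)$ itself uses labels of invented elements, so it cannot be compared with a differently labeled completion.

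In fact no probe whose new arrows all leave $b$ can separate. The map $F(b)\to G(b)$ constant at $G(f)(1)$ commutes with $f$, $g$ and $t$. Precomposing $G$'s outgoing tables with it therefore extends any member of $Y_3$ compatibly with every observation away from $b$.

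The missing idea is an \emph{incoming} probe. The paper adjoins $u$ with $\mu\colon u\to b$ and $t\mu=\mu$, which forces every extension to send the point of $u$ to a fixed point of $F(t)$. It also adjoins $v$ with $\rho\colon b\to v$ and $\rho t=\rho$, and then reveals $G$ on the full subcategory $\{a,c,u,v\}$. Those tables, $\psi=\rho f$ and $\theta=\rho\mu$, contain no invented names. Their values $G(\psi)(1)=1\neq2=G(\theta)(1)$ force $\widetilde F(\mu)(1)\neq F(f)(1)$, i.e.\ two distinct fixed points, which no member of $Y_3$ has. Conversely, each member of $Y_4$ reproduces these observations by sending $\mu$ to its second fixed point $q$, and $\rho$ to the map taking $q$ to $2$ and every other point to $1$.
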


The extra fixed point explains the separation. A new incoming probe can
select it, and a new outgoing probe can distinguish it from the fixed point
reached by the original data. The three-element completion has a single
fixed point, so it cannot reproduce these two different probe values.
Appendix~\ref{app:beyond} constructs the probes and proves the refutation.
The example shows how later observations can resolve ambiguity that the
current constraints permit.

\subsection{Chains, Monotonicity, and Entrenchment}

Later observations can also be tested after a model has committed to a
completion. To represent such a commitment, we promote the completed tables
to observed data and extend them again. This makes the effect of retaining
an earlier answer mathematically explicit.

\begin{definition}[Incorporation and Chain]\label{def:chain}
A successor scheme for $S$ consists of a finite category $\cC'$ containing
$\cC$ as a proper full subcategory, a constraint list $\cK'$, and a bound
$N'$. The constraints satisfy A4, entail the bound on the new objects, and
are invariant under renaming at every object, including objects in $\cC$.
For a commitment $X\in\Adm(S)$, define
\[
\iota(S,X,T):=(\cC',\cC,\cD,X,\cK',N'),
\qquad T=(\cC',\cK',N').
\]
A chain iterates incorporation along a fixed sequence of successor schemes,
choosing an admissible commitment at each completed stage.
\end{definition}

The stronger invariance condition lets the successor use the same constraint
text for different renamings of the preceding answer. Equations, size bounds,
and marked limit conditions expressed in the structural language have this
property. It ensures that the future verdict depends on the commitment's
structure.

\begin{theorem}[Incorporation Is Well-defined]\label{thm:chainwd}
The incorporated instance satisfies A1--A5. Gauge-equivalent commitments
yield data-isomorphic successors with corresponding admissible components
of equal size and equal bounded extension counts. Their chance levels agree
whenever the bounded answer spaces are nonempty. For $\cL_{\Kan}$,
(J1)--(J4) have equal truth values on these successors.
\end{theorem}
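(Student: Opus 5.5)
We split the proof into the two halves of the statement: first verify A1--A5 for the incorporated instance, then transport everything along a gauge equivalence of commitments.

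\emph{Assumptions.} For A1--A5 we read off the appendix conditions one at a time.
\begin{itemize}[leftmargin=1.6em,itemsep=1pt,topsep=2pt]
\item $\cC'$ is finite, and $\cC\subset\cC'$ is a nonempty proper full subcategory by the definition of a successor scheme.
\item The new data $X$ is a functor $\cC\to\cD$ with finite values, because $X\in\Adm(S)\subseteq\Ext(\Fz)$.
\item The constraints in $\cK'$ are decidable and renaming-invariant (A4), and they entail the bound $N'$ at the new objects. Both facts are part of the scheme.
\item The bounded answer space $\Ext_{N'}(X)$ is finite and enumerable, because only objects of $\cC'\setminus\cC$ are free and each is bounded by $N'$.
\end{itemize}
This part is bookkeeping.

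\emph{Transport.} Let $X'=\sigma\cdot X$ be gauge-equivalent to $X$, via bijections $\sigma_c$ for $c\in\cC$ that are the identity on $\cCz$. Extend this family to $\cC'$ by setting $\sigma_d=\id$ for every $d\in\cC'\setminus\cC$. For $F\in\Ext(X)$, define $\sigma\cdot F$ by conjugating every morphism table, $(\sigma\cdot F)(f)=\sigma_y F(f)\sigma_x^{-1}$. Functoriality is preserved because conjugation respects composition and identities. Since $(\sigma\cdot F)|_{\cC}=\sigma\cdot X=X'$, the map $F\mapsto\sigma\cdot F$ is a bijection $\Ext(X)\to\Ext(X')$, with inverse given by $\sigma^{-1}$. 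It fixes the sizes at new objects, so it restricts to a bijection $\Ext_{N'}(X)\to\Ext_{N'}(X')$. The successors are data-isomorphic via $\sigma$, since the new data $X$ and $X'$ differ exactly by this renaming.

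\emph{Invariance of $\cK'$.} This is the key step. The strengthened invariance in Definition~\ref{def:chain} covers renaming at \emph{every} object, including those of $\cC$. Hence $P(\sigma\cdot F)\iff P(F)$ for every $P\in\cK'$, so $\sigma$ maps $\Adm(\iota(S,X,T))$ bijectively onto $\Adm(\iota(S,X',T))$. This is where the stronger invariance hypothesis is essential; ordinary A4 invariance only allows renaming at new objects.

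The bijection also carries gauge components to gauge components. The successor gauge group $G=\prod_{d\in\cC'\setminus\cC}\Sym(n_d)$ acts only at new objects, where $\sigma$ is the identity, so the two actions commute. Therefore component sizes are equal, the bounded counts are equal, and the chance ratios agree whenever the denominators are nonzero.

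\emph{The Kan conditions.}
\begin{itemize}[leftmargin=1.6em,itemsep=1pt,topsep=2pt]
\item (J1) and (J3) follow from the bijection of admissible sets and the correspondence of components.
\item (J4) concerns only $\cC'$ and $\cC$, which are shared by the two successors.
\item (J2) requires naturality of the Kan extensions under data isomorphism: $\Lans(X')\cong\sigma\cdot\Lans(X)$ in the gauge sense, and likewise for $\Rans$.
\end{itemize}
We expect (J2) to be the main obstacle. We would obtain it from the universal property via Proposition~\ref{prop:wellposed}. Precomposing with the isomorphism $\sigma\colon X\Rightarrow X'$ turns a left Kan extension of $X$ along $\incl$ into one of $X'$. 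Choosing the strict representative whose restriction equals $X'$ exactly is precisely conjugation by $\sigma$ extended by identity at new objects, up to a gauge at new objects. Hence $\sigma$ maps $\Kan(X)$ onto $\Kan(X')$. Combined with $\sigma$-invariance of admissibility, this gives $\Adm\cap\Kan=\emptyset$ on one successor if and only if it holds on the other.

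Alternatively, we could check this directly from the finite colimit and limit formulas of Appendix~\ref{app:kan}. These formulas are built from the tables of $X$ alone, so relabeling the elements of $X$ relabels the resulting representatives.
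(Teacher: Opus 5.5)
Your proposal is correct and follows the paper's proof essentially step by step. You extend $\sigma$ by identities on $\cC'\setminus\cC$ and conjugate to get a size-preserving bijection $\Ext(X)\to\Ext(X')$, then use the strengthened all-object invariance of $\cK'$ to match the admissible sets, note that this renaming commutes with successor gauge tuples so components correspond, and obtain (J2) from the universal property, under which the strictified Kan extensions of $X'$ are the $\widehat\sigma$-conjugates of those of $X$ up to gauge.
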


Appendix~\ref{app:beyond} proves the result by transporting the completed
function tables. Its canonicality conclusion also holds for libraries whose
operators respect isomorphisms of the data. A successor still requires its
own solvability and non-canonicity checks; incorporation by itself supplies
an extension problem, rather than a further certified jump.

Every successor extends the preceding commitment exactly. The earlier
admissibility conditions therefore remain binding, as
Lemma~\ref{lem:mono} shows after transporting them to a common category.
To compare alternative commitments, we keep the future schemes fixed and
ask which earlier answers admit some continuation.

\begin{definition}[Retrospective and Ultimate Correctness]\label{def:retro}
Fix the successor schemes and the commitments preceding stage $k$.
An answer $X\in\Adm(S_k)$ is retrospectively correct through stage
$\ell\ge k$ if there exists $H\colon\cC_\ell\to\cD$ with
$H|_{\cC_k}=X$ satisfying every constraint imposed at stages $k+1,\ldots,\ell$.
Let $\mathcal R_\ell(k)$ be the set of these answers. Ultimate correctness
means membership in $\bigcap_{\ell\ge k}\mathcal R_\ell(k)$.
\end{definition}

The sets $\mathcal R_\ell(k)$ decrease with $\ell$ and are unions of gauge
components. Since $\Adm(S_k)$ is finite, the sequence eventually stabilizes,
although this gives no bound on the stage of stabilization. Under (J3),
all admissible answers have the same continuation verdict. Ambiguity is
therefore needed for later constraints to eliminate one admissible component
while preserving another.

\begin{theorem}[Failure of Continuation]\label{thm:entrench}
There is a two-stage scheme starting from the relaxed seed with
$\Adm(S_1)=Y_3\sqcup Y_4$. Stage two adds a probe object $d$ and a map
$r\colon d\to b$ that represents the fixed-point set of the committed
involution, together with $|F(d)|\le1$. Every commitment in $Y_3$ has one
admissible continuation, which is its right Kan extension and hence a
control completion. Every commitment in $Y_4$ has no admissible continuation.
Consequently $\mathcal R_2(1)=Y_3$.
\end{theorem}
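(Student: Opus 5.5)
We will build the second stage explicitly and then do a case analysis on the two admissible components $Y_3$ and $Y_4$ of the relaxed seed from Proposition~\ref{prop:separation}.

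\textbf{The category.} Let $\cC_2$ extend $\cC$ by one object $d$ and a generating morphism $r\colon d\to b$, with the single relation $t\circ r=r$. The morphisms out of $d$ are then $r$ and $g\circ r\colon d\to c$ (and $t^k\circ r=r$). The only morphism into $d$ is $\id_d$. Associativity and closure of the composition table are checked directly from the seed's relations $t\circ f=f$, $g\circ t=g$ and $t^2=\id_b$.

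\textbf{The constraints.} The list $\cK'$ has two entries.
\begin{itemize}[leftmargin=1.6em,itemsep=1pt,topsep=2pt]
\item A marked limit condition: $(F(d),F(r))$ is an equalizer of $\id_{F(b)}$ and $F(t)$. Concretely, $F(r)$ is injective with image $\mathrm{Fix}(F(t))$.
\item The bound $|F(d)|\le 1$, with $N'=4$.
\end{itemize}
Both conditions are decidable on tables and invariant under renaming at every object, including $b$, since renaming $F(b)$ conjugates $F(t)$ and carries fixed points to fixed points. The bound entails itself. So this is a valid successor scheme in the sense of Definition~\ref{def:chain}, and Theorem~\ref{thm:chainwd} applies to each incorporated instance $\iota(S_1,X,T)$.

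\textbf{Key step: the limit condition is exactly the right Kan extension at $d$.} Using the finite formula of Appendix~\ref{app:kan}, $\Rans(d)$ is the set of compatible families over the comma category $d\downarrow\incl$. Its objects are $r$, $g\circ r$, and $t\circ r=r$. A compatible family is therefore an element $x\in X(b)$ with $X(t)x=x$, together with the forced value $X(g)x\in X(c)$. Hence $\Rans(d)\cong\mathrm{Fix}(X(t))$, with $\Rans(r)$ the inclusion. By the universal property, any extension satisfying the equalizer condition is gauge-equivalent to $\Rans$ via the identity on $\cC_1$, since the data-fixing comparison map into $\Rans(d)$ is a bijection. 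Conversely, $\Rans$ itself satisfies the equalizer condition.

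\textbf{Case analysis.}
\begin{itemize}[leftmargin=1.6em,itemsep=1pt,topsep=2pt]
\item For $X\in Y_3$: $X(t)$ has exactly one fixed point, the image of $X(f)$, so $\Rans(d)$ is a singleton and satisfies $|F(d)|\le1$. Uniqueness up to gauge follows from the previous step, and in fact holds on the nose, because a singleton admits no nontrivial renaming. So the unique admissible continuation is the right Kan extension. Moreover $\Adm=[\Rans(\cC_2,\cC_1,X)]$, together with (J1), (J3) and (J4) (the morphism $r$ goes to an observed object), shows that this instance is a control targeting $\Rans$ in the sense of Definition~\ref{def:control}.
\item For $X\in Y_4$: $X(t)$ has two fixed points, the incoming one and the extra one. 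The equalizer condition forces $|F(d)|=2$, contradicting $|F(d)|\le1$. So no continuation exists.
\end{itemize}
Since $\mathcal R_2(1)$ collects exactly the commitments admitting an admissible continuation, we obtain $\mathcal R_2(1)=Y_3$.

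\textbf{Expected obstacle.} The main difficulty is making the phrase ``represents the fixed-point set'' legitimate. The equalizer must be expressed as an admissible structural constraint that is renaming-invariant at $b$, so that Theorem~\ref{thm:chainwd} applies. We must also verify carefully that the comma-category computation of $\Rans(d)$ yields exactly $\mathrm{Fix}(X(t))$, with no extra identification from $g\circ r$; this holds because $X(c)$ is a singleton. If the equalizer constraint proves awkward, the fallback is to impose $t\circ r=r$ together with surjectivity of $F(r)$ onto $\mathrm{Fix}(F(t))$. This already forces $|F(d)|\ge2$ on $Y_4$ while leaving the singleton answer on $Y_3$.
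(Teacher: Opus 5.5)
Your proposal is correct and follows essentially the paper's proof. You use the same probe object $d$ with $t\circ r=r$, the same marked equalizer (limit cone over $(t,\id_b)$) plus $|F(d)|\le 1$, the same computation of the right Kan value at $d$ as the fixed-point set $\Phi$, and the same dichotomy $|\Phi|=1$ on $Y_3$ versus $|\Phi|=2$ on $Y_4$. The differences are minor and do not affect the conclusion. You identify the admissible set with the $\Rans$ component for every commitment via the comparison map $F\Rightarrow\Rans$, whereas the paper parametrizes extensions by $q$ and $F(r)$ and matches the Kan extension only in the singleton case; you also take $N'=4$ where the paper takes $N'=1$.
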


The probe requires one element for each fixed point. A commitment in $Y_3$
has one such point and satisfies the probe bound; a commitment in $Y_4$ has
two and violates it. Appendix~\ref{app:beyond} expresses this requirement
as an equalizer and proves the two verdicts.
The result isolates an obstruction caused by retaining an earlier admissible
answer. Measuring whether a model revises that answer requires allowing it
to reopen the commitment, which strict incorporation fixes as data.

The later two steps thus require ambiguity and additional evidence beyond
the identifiable override instances. Section~\ref{sec:experiments} evaluates
the calibrated override on certified problems; predictive selection and
revision across stages remain separate measurements.

\section{Experiments}\label{sec:experiments}

Having certified the instances, we now test whether models replace their measured
default with a correct completion when the constraints exclude that default. We first
check calibration and controls, then examine the Kan-default rate together with jump
accuracy across three certified families.

\paragraph{Models and Instances.} We evaluate fourteen models. Four are reached through a
commercial API, GPT-5.6 Luna Pro, Claude Sonnet 5, Gemini 3.1 Pro, and DeepSeek V4 Pro,
and ten are open-weight models served locally, for which we also score candidate answers
using token log probabilities. Eight of the open models write a reasoning trace before
their answer, whereas Llama 3.3 70B and Qwen2.5 72B answer directly. The instances come
from three certified families. The pointed-chain family supplies the nine instances of
Section~\ref{sec:props}, six certified by enumeration and three by
Theorem~\ref{thm:family}, with chance levels from $0.13$ down to $8.1\times10^{-7}$.
The scaled family uses eighteen chain configurations with ten nonce vocabularies each,
producing 180 instances and carrying chance down to $6.1\times10^{-10}$. An asymmetric
family supplies 70 instances over seven configurations with different numbers of observed
input and output slots. There $\Lan$ and $\Ran$ differ, the constraints exclude both,
and each configuration is certified by enumeration.

\paragraph{Protocol.} Each instance is rendered with nonce vocabulary inside a neutral
cover story. We obtain independent calibration and jump answers by removing or supplying
the excluding constraints. A matched control selects the canonical completion
through a different requirement block.
The asymmetric family has one control for each Kan extension. The main pointed-chain
runs use six jump, four calibration, and two control calls per instance and rendering,
the first greedy and the rest at temperature $0.7$. Open models receive
both cover stories; API models receive the primary story plus eight jump-only wording
samples each. The larger families use the schedules in Appendix~\ref{app:exp}.
Answers are graded by the certified structural characterizations, so membership is exact
and gauge-invariant. Truncated answers count as failures and remain in the denominators.
The main output budget is $12{,}000$ tokens; a separate arm uses $18{,}000$ tokens.
Together, the three families and the additional wording, scratchpad, and budget runs yield
$30{,}768$ graded answers, of which $13{,}300$ carry the excluding constraints.
Appendix~\ref{app:exp} gives the sampling settings and the counts for each run.

\paragraph{Instrument Validation.} On the pointed-chain family, the API models return a
Kan extension in $129$ of $144$ calibration calls, and the eight open models with a
reasoning trace in $297$ of $576$. Table~\ref{tab:main} reports these rates by model,
including unfinished answers. Calibration is weak for Qwen3.6 35B-A3B and Qwen3.6 27B,
which return canonical answers in $7/72$ and $0/72$ calls and frequently exhaust their
budget. Llama 3.3 70B and Qwen2.5 72B finish their calibration calls but return a Kan
extension in only $8/144$ of them, placing most of their outputs outside the library on
this family. Controls are passed in $69/72$ API calls, $227/288$ calls from the reasoning
models, and all $72$ calls from the two directly answering models. Where calibration and
controls are reliable, constrained accuracy determines whether replacing the default
succeeds. Appendix~\ref{app:exp} also reports the conditional success quantity in (C1),
matching independent calls by instance, rendering, and temperature. The two cover stories
yield similar aggregate jump counts for the reasoning models, $242/432$ and $254/432$.
The pooled chain calibration and canonical-rate intervals appear in
Figure~\ref{fig:instrument} in Appendix~\ref{app:figs}.

\begin{table}[tbp]
\centering
\caption{Original-chain calibration ($\DC$) and control counts, with jump accuracy for
each family. All calls remain in the denominators. $\KD$ counts canonical answers
across these families, excluding scratchpad and extra-budget runs. Daggers mark
models with over $40\%$ length-limited jump calls; dashes mark unevaluated
families. Appendix~\ref{app:exp} gives the sampling scopes.}
\label{tab:main}
\small
\definecolor{LightGray}{gray}{0.92}
\rowcolors{2}{white}{LightGray}
\renewcommand{\arraystretch}{1.12}
\begin{tabular}{lcccccc}
\toprule
\textbf{model} & $\DC$ & \textbf{control} & \textbf{chain} & \textbf{scaled} & \textbf{asym.} & $\KD$\\
\midrule
\multicolumn{7}{l}{\emph{Frontier models behind an API}}\\
GPT-5.6 Luna Pro & 35/36 & 18/18 & 0.95 & -- & -- & 0/62\\
Claude Sonnet 5 & 36/36 & 18/18 & 0.98 & -- & -- & 0/62\\
Gemini 3.1 Pro & 35/36 & 18/18 & 1.00 & -- & -- & 0/62\\
DeepSeek V4 Pro & 23/36 & 15/18 & 0.71 & -- & -- & 0/62\\
\multicolumn{7}{l}{\emph{Open-weight models with a reasoning trace}}\\
Qwen3.6 35B-A3B$^{\dagger}$ & 7/72 & 3/36 & 0.08 & 0.02 & 0.22 & 0/1208\\
Qwen3 32B & 55/72 & 36/36 & 0.97 & 0.89 & 1.00 & 0/1208\\
Qwen3.6 27B$^{\dagger}$ & 0/72 & 17/36 & 0.19 & 0.08 & 0.63 & 0/1208\\
gpt-oss 20B & 56/72 & 35/36 & 0.97 & 0.93 & 0.99 & 0/1208\\
Qwen3 14B & 66/72 & 36/36 & 0.93 & 0.86 & 1.00 & 0/1208\\
Qwen3.5 9B$^{\dagger}$ & 34/72 & 32/36 & 0.41 & 0.26 & 0.90 & 0/1208\\
Qwen3 8B & 60/72 & 36/36 & 0.77 & 0.62 & 1.00 & 0/1208\\
Qwen3.5 4B$^{\dagger}$ & 19/72 & 32/36 & 0.28 & 0.19 & 0.88 & 0/1208\\
\multicolumn{7}{l}{\emph{Open-weight models answering directly}}\\
Qwen2.5 72B & 2/72 & 36/36 & 0.00 & 0.00 & 0.01 & 0/1208\\
Llama 3.3 70B & 6/72 & 36/36 & 0.01 & 0.03 & 0.01 & 1/1208\\
\bottomrule
\end{tabular}
\end{table}

\paragraph{Main Results.} Table~\ref{tab:main} places
canonical calibration beside performance under the excluding constraints. We first ask
how often the default remains and whether its replacement is correct, then examine
larger instances, candidate likelihoods, distinct Kan extensions, and evaluation conditions.

\textbf{Insight 1: the canonical completion returns once in $13{,}300$ constrained answers.}
Across the main and supplementary runs, one of $13{,}300$ constrained answers is canonical,
giving a pooled rate of $7.5\times10^{-5}$ and a descriptive Wilson $95\%$ upper bound of
$4.3\times10^{-4}$. The single event is a temperature-$0.7$ answer from Llama 3.3 70B on
the asymmetric family, where $553$ of its $560$ jump answers violate a rule. The canonical
completion appears in $0/3{,}586$ greedy answers and $1/9{,}714$ sampled answers; the
scratchpad arm contributes $0/540$. These counts show that the excluded default is rarely
the final answer. The calibrated cases with high accuracy establish successful replacement.
On the primary chain rendering, GPT-5.6 Luna Pro, Claude Sonnet 5, and Gemini
3.1 Pro give canonical calibration answers in $106/108$ calls, correct jump answers in
$159/162$, and pass all $54$ controls. For these models, the changed constraints reliably
select the correct non-canonical completion.

\begin{figure}[tbp]
\centering
\includegraphics[width=\textwidth]{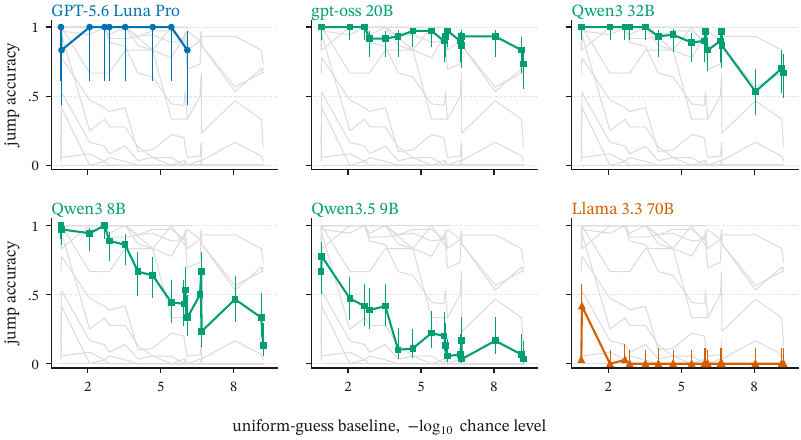}
\caption{Primary-story jump accuracy for six models spanning high, declining, and low
accuracy. Gray curves show the other models; bars are nominal Wilson $95\%$
intervals. Original and scaled calls are pooled for shared configurations.
Figure~\ref{fig:difficulty-full} gives all fourteen panels and the sample counts.}
\label{fig:difficulty}
\end{figure}

\textbf{Insight 2: larger instances separate the models in completion accuracy.}
Figure~\ref{fig:difficulty} plots accuracy across configurations using their certified
chance levels as the horizontal axis. On the scaled family, gpt-oss 20B and Qwen3 32B
achieve $0.93$ and $0.89$ accuracy over $540$ jump calls each, with canonical calibration
in $292/360$ and $264/360$ calls and all $180$ greedy controls passed by each model.
Thus correct replacement extends to larger chains, although accuracy varies across
configurations. The two groups of open models also fail differently. Across their chain,
scaled, and asymmetric jump calls, Llama 3.3 70B and Qwen2.5 72B violate a rule in
$2{,}385/2{,}416$ answers. The reasoning models violate a rule in $81/9{,}664$ and produce
$3{,}313$ format or termination failures, including $3{,}291$ length-limited responses.
These errors distinguish failure to satisfy the task from returning the default, and
make output budget a relevant control.

\textbf{Insight 3: canonical answers also receive higher teacher-forced scores.}
For each open model, we score one canonical answer and one admissible answer under the
calibration prompt, disabling thinking where the chat template supports it. The canonical
text has the higher summed token log probability in $174$ of $180$ model, instance, and
rendering combinations, with a median advantage of $14.8$ nats
(Appendix~\ref{app:figs}, Figure~\ref{fig:logprob}). This comparison favors the canonical text under immediate
answering, but the text is also shorter. Using mean token log probability instead gives
$132/180$ canonical preferences. These scores compare the two serialized answers;
sampled calibration measures how often the model actually returns a canonical completion.

\begin{figure}[tbp]
\centering
\includegraphics[width=\textwidth]{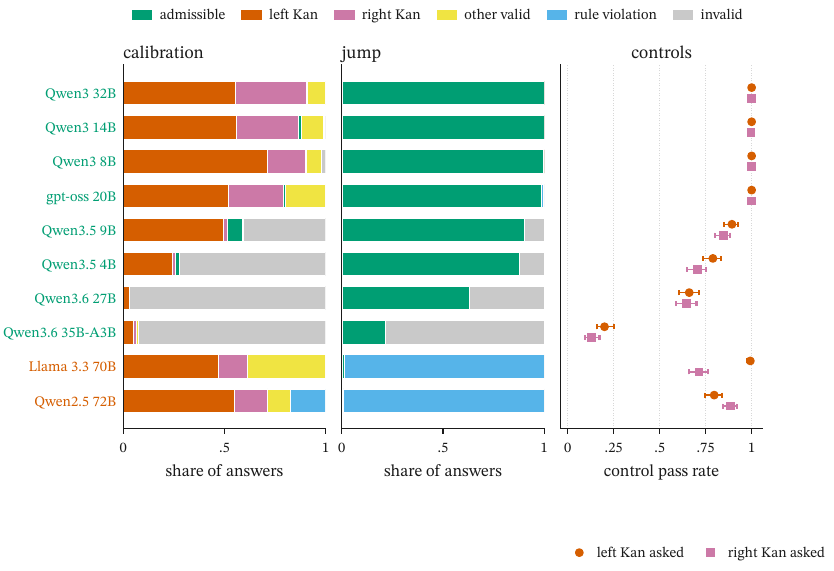}
\caption{The asymmetric family, with distinct $\Lan$ and $\Ran$ completions and both
cover stories pooled. Left: calibration outcomes over $420$ calls per model.
Middle: constrained outcomes over $560$ calls. Right: controls requesting each Kan
extension, with $280$ calls per control and nominal Wilson $95\%$ intervals.
The gray invalid category includes format and termination failures.}
\label{fig:asym}
\end{figure}

\textbf{Insight 4: when the Kan extensions differ, calibration favors $\Lan$.}
On the pointed-chain family, $\Lan$ and $\Ran$ coincide, so a canonical answer cannot
distinguish the two completions. On the asymmetric family, $1{,}755$ of $4{,}200$
calibration calls return $\Lan$ and $617$ return $\Ran$. The equal-size configuration
has $228$ and $30$ such answers out of $600$ calls, so the imbalance is present even
when the two canonical answers have the same size. Teacher forcing also favors $\Lan$
in $1{,}067$ of $1{,}400$ fixed-string comparisons. Under the excluding constraints, the eight reasoning
models reach the admissible completion in $3{,}702/4{,}480$ calls; they pass
$1{,}770/2{,}240$ controls selecting $\Ran$ and $1{,}832/2{,}240$ selecting $\Lan$.
Figure~\ref{fig:asym} also shows why calibration should be measured on each family.
Llama 3.3 70B and Qwen2.5 72B return canonical calibration answers in $258/420$ and
$299/420$ calls here, yet solve $6/560$ and $4/560$ jump calls. Canonical calibration
on this family thus coexists with frequent failure under the excluding constraints.

\textbf{Insight 5: output budget substantially changes the measured accuracy.}
We repeat four models on the pointed-chain family with the budget raised to $18{,}000$
tokens (Appendix~\ref{app:exp}, Table~\ref{tab:budget}). Correct answers rise from $177/432$ to $353/432$,
while length-limited answers fall from $250/432$ to $64/432$. These independent reruns
show substantial budget sensitivity alongside sampling variation. Calibration remains
weak for Qwen3.6 27B, which improves from $17/36$ to $27/36$ on controls but returns
canonical calibration answers in $4/72$ calls at the larger budget.
Both budgets leave the Kan-default rate at zero, while the number of correct answers changes.

\textbf{Insight 6: deployment effects are real and call for controls.}
Provider filters can prevent evaluation before a mathematical answer is produced. The
provider-screening notes record that one vendor's strictest tier refused every tested
rendering, including a pure-mathematics statement, whereas its middle tier answered them.
Three open checkpoints were also unavailable because the serving stack failed to build
their kernels. These access failures are recorded separately from task performance
(Appendix~\ref{app:discipline}).

\paragraph{Exact-search Reference.} The enumerate-and-check certifier generates bounded
extensions and retains those satisfying $\cK$. For the six enumerated chain instances,
the recorded generation and certification runs take at most $2.1$ seconds. At $m=3$,
Theorem~\ref{thm:family} counts between $1.3\times10^{6}$ and $2.4\times10^{8}$ candidate
extensions for the original configurations without enumerating them; its witness also
gives a direct structural solution. A small chance level therefore measures uniform
bounded-table guessing, rather than computational hardness. The experimental question is
whether a calibrated model supplies a correct completion when the constraints invalidate
its default (Section~\ref{sec:scope}).

\section{Discussion and Limitations}\label{sec:scope}

\paragraph{What a Pass Means.} Our test measures a component of the jump
discussed by \citet{zahavy2026jump}. The learner replaces its
\emph{measured} canonical default with the non-canonical extension selected
by the supplied constraints. Passing demonstrates constraint-guided
selection within a fixed answer space. Its interpretation requires
canonical calibration, correct constrained answers, and matched controls;
a low Kan-default rate alone also includes malformed and incorrect answers.
A symbolic solver passing Tier~1 is unproblematic, because the diagnostic
question concerns the learner's measured default and its ability to choose
another correct completion. Tier~1 yields an \emph{instrument} relative
to that learner and the chosen canonical library. The propositions of
Section~\ref{sec:beyond} supply the additional structure needed to measure
the third and fourth steps.

\paragraph{What Is Bracketed; Tier 2.} Four parts of the full jump remain
outside the instrument. The constraints $\cK$ are supplied rather than
generated from an anomaly, and the embodiment thesis
\citep{magnani2009abductive,zahavy2026jump} concerns that generation step.
The codomain $\cD$ is also supplied. Finite structured objects such as
groups and posets can be represented using sets, functions, and additional
conditions, so $\FinSet$ permits a range of structures within the given
representation. Inventing a new codomain remains a Tier~2 question.
The third part concerns evaluating a jump when its correctness is not yet
known. Section~\ref{sec:beyond} represents this by withholding the world
that would settle the answer until a later probe. The fourth concerns
what triggers the search and what the searcher intends. Tier~2 addresses
these broader questions of generation and interpretation, with Tier~1
providing a test of selection under supplied constraints.

\paragraph{Limitations.} Two limitations affect the measurement. First,
the implemented families admit direct structural solutions. The family
theorem scales certification beyond brute-force enumeration, but its small
chance levels describe uniform guessing over bounded tables rather than
computational difficulty for an algorithm using the structure. Scaling
these families therefore tests a specific range of completion problems.
Second, calibration may place much of a learner's default distribution
outside $\cL$, or unfinished responses may make that distribution difficult
to measure at the chosen budget. Llama~3.3~70B and Qwen2.5~72B rarely
return canonical calibration answers on the pointed chains, while their
canonical rates are substantially higher on the asymmetric family.
Qwen3.6~27B and 35B-A3B instead have many unfinished calibration responses.
These observations call for model-, family-, and budget-specific readings
of calibration and success.

\paragraph{Library Relativity and What the Default Is.} Non-canonicity is
defined relative to $\cL$, and our instantiation takes
$\cL=\{\Lan,\Ran\}$. Matched controls also test a broader alternative.
A fixed completion of the shared data, chosen independently of the
constraint block, fails at least one of a jump and a control that select
different answers (Proposition~\ref{prop:null}). The asymmetric family
compares the two canonical references separately (Figure~\ref{fig:asym}).
These comparisons concern returned structures; identifying the procedure
that computes them requires additional evidence.

\paragraph{Next Measurements.} The first extension is the staged chain
experiment motivated by Section~\ref{sec:beyond}. Its theorem identifies
an earlier admissible commitment that has no continuation under a later
probe. Measuring a model's response requires allowing it to revise that
commitment and observing whether the revision succeeds. The current
independent prompt arms do not measure this behavior. A second extension
would diversify the certified generators, including constraints that mark
(co)limiting cones and deliberately $k$-ambiguous instances. Within the
present test, matching answer lengths and comparing direct with indirect
exclusion would further test sensitivity to how the constraints are
expressed.

\FloatBarrier
\section{Conclusion}\label{sec:conclusion}

We have developed a four-step account of the jump and a certified test of
its override step. The certificates identify a correct completion distinct
from the Kan extensions, and calibration tests whether those
extensions describe a model's own default. Several models produce the
correct completion under the supplied constraints, providing evidence
that they can perform this part of the jump. The remaining steps concern
how such a completion agrees with further observations and how earlier
commitments affect later choices. Our theoretical results make these
questions precise, while their empirical measurement and the generation
of constraints and answer spaces remain open. We hope this framework
helps connect controlled measurements with the broader question of how
models develop and revise explanatory structures.



\bibliography{references}
\bibliographystyle{plainnat}

\appendix

\section{Related Works}

\paragraph{Jumps in LLM.} The initial work in \citet{zahavy2026jump} argues that LLMs lack the abductive move from given evidence
(denoted as $E$) to a system of axioms (denoted as $A$). That work attributes the gap to the absence
of embodied simulation and therefore proposes interactive world models as the
remedy. The argument remains programmatic rather than formal.
In parallel, \citet{farmer2026abduction} challenges the proposed requirement for
embodied grounding in abduction. Consistent with this view, systems such as AlphaEvolve
have produced novel mathematical constructions within fixed formal languages
\citep{novikov2025alphaevolve}. The direct responses also decompose the jump
differently. \citet{gangloff2026assessing} argues that the jump is not a single capacity and
asks benchmarks to disaggregate by mechanism. \citet{saldana2026stratified}
decomposes discovery into operators on a stratified graph and conjectures that
the irreducible residue is the retraction of axioms. \citet{murphy2026model}
engineers around the limitation by letting a Bayesian loop request new
hypotheses from the model.
Our account can coexist with each of these decompositions, since the
Kan-default rate presupposes only a computable default and a certified
exclusion rather than a full theory of the jump. Under our formulation, the disputed thesis corresponds to a high Kan-default
rate that persists even when the in-context constraints are certified to
falsify the default. The thesis in this rendering is directly falsifiable.

\paragraph{Category Theory and Learning.} \citet{shiebler2022kan} casts
generalization from partial data as Kan extension, and \citet{pugh2025learning}
presents error-minimization algorithms as Kan extensions in enriched settings.
Building on this line, we invert the characterization into a null hypothesis.
We construct problems whose admissible answers are certified to differ from
both Kan extensions. A calibration phase then checks whether each model's
unconstrained default actually coincides with a Kan extension. \citet{yuan2023power} proves within a stylized categorical model that a task
is prompt-solvable if and only if its task functor is representable. This
result characterizes prompt solvability within its categorical model. Our
instances provide finite, certified completion tasks for empirical evaluation. More recently, \citet{minegishi2026emergent} shows that transformers apply
functor-like correspondences between two instantiated structures. In our
instances, the learner constructs a completion over unobserved objects and
maps, subject to the supplied constraints. Categorical accounts of architectures
\citep{fong2019backprop,gavranovic2024categorical} and of language
\citep{coecke2010mathematical,bradley2022enriched} provide background but do
not yield a capability test.

\paragraph{Formal Abduction and Computational Creativity.} Peircean abduction has also received categorical characterizations
\citep{tohme2015abduction,caterina2016iconicity}, yet the line stops short of a
learner-facing test. Conceptual blending casts creative combination as a colimit, which is a
universal gluing construction
\citep{goguen1999semiotics,eppe2018computational}. Transformational creativity
is modeled as meta-level search \citep{wiggins2006framework}. Bayesian theory
discovery instead searches a grammar of structures \citep{kemp2008discovery}. \citet{mahadevan2026lincs} likewise takes Kan
invariance as the target of learning, asking whether a learned extension
satisfies the universal property. In our test, the canonical
construction supplies a reference completion that the constraints exclude.

\paragraph{Benchmarks for Systematicity and Abduction.}
\citet{phillips2010categorial} explain systematicity by adjunctions, and we
deploy those same operators as foils. Compositional-generalization benchmarks
\citep{lake2018generalization,dziri2023faith} grade convergence toward the
canonical completion of a training fragment. ARC-style tasks
\citep{chollet2019measure,moskvichev2023conceptarc} grade against the rule the
author intended. In neither family is the canonical completion certified as
wrong, so failures are difficult to attribute. In contrast, our instances certify the canonical completion as excluded. Our
Kan-control arm moreover serves as a miniature systematicity test. A
learner that consistently returns its canonical operator passes the
corresponding controls and scores zero on jump instances. These controls
distinguish producing that completion from satisfying the excluding constraints. \citet{meszaros2024rule} formalize rule extrapolation on out-of-distribution
prompts through a simplicity prior. Their normative completion plays the role
that the two Kan extensions play for us. When several simple defaults compete,
our predefined library and the pair-refutation argument of
Section~\ref{sec:scope} absorb the case. A recent survey divides abductive
reasoning in LLMs into generating candidate explanations and selecting among
them \citep{salimi2026wiring}; generation-side systems such as HypoAgent,
which proposes and refines hypotheses over knowledge graphs
\citep{gao2026hypoagent}, work the half that Tier~2 defers, and our
instrument measures the selection half. GEAR evaluates abduction without gold labels by scoring
hypothesis sets for consistency, generalizability, and diversity
\citep{he2025gear}. Our scoring uses structural certification of admissibility,
with answers compared up to renaming. \citet{sun2025occam} measure whether models prefer parsimonious
hypotheses. On our instances the parsimonious completion is the canonical
one and the constraints certify it wrong, so the two measurements pull in
opposite directions by design. \citet{kirkpatrick2025generics} test default
reasoning over linguistic generics, where the default is a usage convention
rather than a certified mathematical object. Closest in spirit, \citet{cooper2026defab} mine public knowledge bases into
poly-time-verifiable defeasible-abduction instances. Each instance asks for a hypothesis that overrides a default while preserving
unrelated expectations. On these instances, frontier models fall far short of
a rule-based solver.
\citet{peng2025abductive} couple multimodal models with an
inductive-logic-programming system that repairs their abduced rules. Both lines score hypotheses with symbolic verifiers, yet in both the rejected
default remains implicit in the source knowledge base. Our instances instead
certify an exact canonical foil together with a computable chance level. A
zero Kan-default rate is therefore attributable to the override itself.

\section{Technical Assumptions and Kan Formulas}\label{app:kan}

The finite-set formulation in the main text is a concrete case of the
following assumptions. They separate the conditions for finite certification
from the structural conditions (J1)--(J4).

\begin{assumption}[Finite Syntax]\label{ass:C}
$\cC$ is given by finite tables of objects, morphisms, identities, and
composition. A presentation used in a prompt denotes these explicit tables.
\end{assumption}
\begin{assumption}[Full Data Inclusion]\label{ass:C0}
$\cCz$ is a nonempty proper full subcategory of $\cC$, with inclusion
$\incl$, and $\Fz\colon\cCz\to\cD$ is the observed functor.
\end{assumption}
\begin{assumption}[Ambient Codomain]\label{ass:D}
We use skeletal $\FinSet$ with size $s([n])=n$. More generally, $\cD$ may be
an effectively presented skeletal category with computable chosen finite
limits and colimits, decidable equality and computable category operations.
Its hom-sets and its objects of size at most $N$ are finite and effectively
enumerable for every $N$, using a computable size function $s$.
\end{assumption}

The complete ambient category supplies the Kan extensions even when their
sizes exceed the allowed answer bound. We impose that bound on solutions
rather than truncating the codomain, since truncation can remove the limits
or colimits needed to define the references.

\begin{assumption}[Constraint Discipline]\label{ass:K}
$\cK$ is a finite list of predicates, polynomial-time decidable on the
instance and extension tables and invariant under gauge equivalence.
Predicates use a fixed, element-blind language of equations, non-equations,
size bounds, and marked diagrams or (co)cones, without constants naming
particular elements or objects of $\cD$.
\end{assumption}
\begin{assumption}[Declared Solution Bound]\label{ass:N}
The learner receives $N\in\N$, and $\cK$ entails $s(F(c))\le N$ at every
new object. Explicit size constraints may supply this entailment.
\end{assumption}
\begin{assumption}[Canon Library]\label{ass:lib}
A finite library $\cL$ of constraint-blind extension operators is fixed before
instance generation. The default library is $\cL_{\Kan}:=\{\Lans,\Rans\}$.
\end{assumption}

Gauge invariance makes a constraint depend on the completed structure.
The element-blind language also prevents the prompt from prescribing an
answer by its element names, although further checks are needed to avoid
prescribing it structurally. Appendix~\ref{app:discipline} describes these
checks and the admissible constraint forms. For general codomains, each
chosen predicate still requires the polynomial-time verification in
Assumption~\ref{ass:K}.

\subsection{Finite Formulas for the Kan Extensions}

We give the concrete finite-set constructions used in
Section~\ref{sec:prelim}. For $c\in\Ob(\cC)$, write
\[
\begin{aligned}
\In(c)&=\{(a,v)\mid a\in\Ob(\cCz),\ v\colon a\to c\},\\
\Out(c)&=\{(b,w)\mid b\in\Ob(\cCz),\ w\colon c\to b\}.
\end{aligned}
\]
These are the object sets of the comma categories $\incl\downarrow c$ and
$c\downarrow\incl$. An incoming-index morphism from $(a,v)$ to $(a',v')$
is $u\colon a\to a'$ in $\cCz$ with $v'\circ u=v$.
An outgoing-index morphism from $(b,w)$ to $(b',w')$ is
$u\colon b\to b'$ in $\cCz$ with $u\circ w=w'$.
All these tables are finite by A1.

The left Kan value is a quotient of a disjoint union of observed sets:
\begin{equation}\label{eq:lan}
\Lan(c)=
\left(\bigsqcup_{(a,v)\in\In(c)}\Fz(a)\right)\!\Big/\!\approx,
\qquad
(a,v'\circ u,x)\approx(a',v',\Fz(u)(x)).
\end{equation}
Here $\approx$ is the equivalence relation generated by the displayed pairs
for all incoming-index morphisms and $x\in\Fz(a)$.
For $f\colon c\to c'$, the function on quotient classes is
$\Lan(f)[a,v,x]=[a,f\circ v,x]$.
Associativity preserves the generating relations and gives functoriality.

The right Kan value is the set of compatible tuples
\begin{equation}\label{eq:ran}
\Ran(c)=
\left\{(y_{(b,w)})\in\prod_{(b,w)\in\Out(c)}\Fz(b)
\ \middle|\
 y_{(b',u\circ w)}=\Fz(u)(y_{(b,w)})\ \forall u
\right\}.
\end{equation}
The condition ranges over all outgoing-index morphisms.
Its action on $f\colon c\to c'$ sends a tuple $y$ to the tuple whose
$(b,w\colon c'\to b)$ coordinate is $y_{(b,w\circ f)}$.
Reindexing respects compatibility, identities, and composition.
An empty incoming diagram gives the empty set, while an empty outgoing
diagram gives a singleton.

Chosen enumerations identify the finite sets in these formulas with their
skeletal representatives $[n]$. At an observed object $c_0$, fullness makes
$(c_0,\id)$ terminal in the incoming category and initial in the outgoing
category. The two values are therefore canonically isomorphic to
$\Fz(c_0)$. Proposition~\ref{prop:wellposed} transports them to strict
extensions and proves independence of the enumerations and limit choices.
For the general codomain of A3, use the corresponding finite comma-category
colimit and limit in place of these set formulas.

\section{Certification Lemmas and Seed Proofs}\label{app:proofs}

\begin{proposition}[Well-posedness of the Canonical Class]\label{prop:wellposed}
Under Assumptions~\ref{ass:C}--\ref{ass:D}, both Kan extensions exist and
are computable as finite comma-category colimits and limits. In $\FinSet$,
Equations~\eqref{eq:lan} and~\eqref{eq:ran} compute their values.
Strict representatives $\Lans,\Rans\in\Ext(\Fz)$ exist, and their gauge
components are independent of the chosen (co)limits and strictification.
\end{proposition}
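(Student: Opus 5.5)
The plan has four parts: existence via pointwise formulas, finiteness and computability, strictification, and independence of choices. First, I would invoke the standard pointwise formula for Kan extensions \citep{maclane1998categories}. Since $\cD$ has the finite colimits and limits demanded by Assumption~\ref{ass:D}, $\Lan_\incl\Fz(c)$ is the colimit of $\Fz\circ\pi$ over the comma category $\incl\downarrow c$, and $\Ran_\incl\Fz(c)$ is the limit over $c\downarrow\incl$. Both comma categories are finite by Assumption~\ref{ass:C}: their objects are $\In(c)$ and $\Out(c)$, and their morphisms are read off the finite composition table. These diagrams are therefore finite and computable, and the chosen (co)limits in $\cD$ are computable. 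In $\FinSet$, the standard coproduct-quotient presentation of a colimit and the equalizer-in-a-product presentation of a limit give exactly Equations~\eqref{eq:lan} and~\eqref{eq:ran}. The action on morphisms is the one already described in Appendix~\ref{app:kan}, and functoriality follows from associativity, as noted there.

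Second, I would strictify. Because $\cCz$ is full, $(c_0,\id)$ is terminal in $\incl\downarrow c_0$ and initial in $c_0\downarrow\incl$. This gives canonical isomorphisms $\theta_{c_0}\colon\Fz(c_0)\to\Lan(c_0)$, which together form the unit $\Fz\Rightarrow\Lan\circ\incl$; this unit is an isomorphism because $\incl$ is fully faithful. I would define $\Lans$ by setting $\Lans(c)=\Fz(c)$ for $c\in\cCz$ and $\Lans(c)=[n_c]$ via a chosen enumeration for $c\notin\cCz$. On morphisms I would conjugate by the family of bijections $\tau_c$, with $\tau_c=\theta_c$ on observed objects and the enumeration otherwise. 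Conjugation by a family of bijections preserves identities and composition, so $\Lans$ is a functor isomorphic to $\Lan$. Naturality of $\theta$ together with fullness gives $\Lans\circ\incl=\Fz$ exactly. The construction of $\Rans$ is dual.

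Third, I would prove independence. Take two strict representatives $F_1,F_2$ arising from different choices of colimit, enumeration, or strictification. By uniqueness of colimits up to unique isomorphism, they are related by a natural isomorphism $\sigma\colon F_1\Rightarrow F_2$ compatible with the units. The main obstacle is showing that $\sigma_c=\id$ for $c\in\cCz$, since that is what the gauge equivalence of Section~\ref{sec:prelim} requires. I would handle it through the universal property in the form stated in the preliminaries: a strict left Kan representative admits a \emph{unique} data-fixing natural transformation to any extension. The composite $\sigma$, suitably conjugated by the strictification bijections, is such a data-fixing map, because both units are identities after strictification. Then $\sigma_{c_0}$ is the comparison of the two terminal-object identifications, and it equals $\id_{\Fz(c_0)}$. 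The components at new objects are bijections, so $[F_1]=[F_2]$; the right Kan case is dual.

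Finally, I would note a sanity check that follows from the previous step. The unique data-fixing transformation from $\Lans$ to any $F\in\Ext(\Fz)$ corresponds, under the adjunction $\Lan_\incl\dashv\incl^*$, to $\id_{\Fz}$. This confirms both the uniqueness claim and that the gauge component depends only on $\Fz$.
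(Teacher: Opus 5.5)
Your proposal is correct and follows essentially the same route as the paper. Both compute the Kan extensions as pointwise comma-category (co)limits. Both strictify by conjugating with the terminal-index isomorphism $\eta_{c_0}$ (the initial index for $\Ran$). Both get independence from the fact that the comparison isomorphism between two colimit choices commutes with the injection at $(c_0,\id)$, so its conjugate is the identity on $\cCz$.

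Your appeal to uniqueness of data-fixing transformations in the third step is not needed, because the compatibility $\gamma\circ\eta=\eta'$ already gives the identity components. It is nonetheless a valid alternative argument once you know the strictified unit is the identity.
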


\begin{proof}[Proof of Proposition~\ref{prop:wellposed}]
The index categories over $\In(c)$ and $\Out(c)$ are finite by
Assumption~\ref{ass:C}. Their chosen colimits and limits are computable by
Assumption~\ref{ass:D}; in $\FinSet$ they are the constructions in
Equations~\eqref{eq:lan} and~\eqref{eq:ran}.
These constructions give the Kan extensions with their action on morphisms.

For $c_0\in\cCz$ the index $(c_0,\id)$ is terminal among $\In(c_0)$: for any
$(a,v)$, fullness (Assumption~\ref{ass:C0}) puts $v$ in $\cCz$ and $u:=v$ is the
unique morphism $(a,v)\to(c_0,\id)$. A colimit over a category with a terminal
object is the value there, giving a natural isomorphism
$\eta_{c_0}\colon\Fz(c_0)\to\Lan(c_0)$. Since $\incl$ is injective on objects,
define $\Lans(c):=\Fz(c)$ for $c\in\cCz$ and $:=\Lan(c)$ otherwise, and transport
morphism tables by conjugation with $\theta_c:=\eta_c$ ($c\in\cCz$), $:=\id$
(new): $\Lans(f):=\theta^{-1}_{\mathrm{cod}}\circ\Lan(f)\circ\theta_{\mathrm{dom}}$.
Functoriality telescopes; for $f$ in $\cCz$, naturality of $\eta$ gives
$\Lans(f)=\Fz(f)$, so $\Lans\circ \incl=\Fz$ on the nose. Dually $\Rans$ via the
counit. Two colimit choices are related by unique comparison isomorphisms
$\gamma_c$ commuting with the colimit injections; $\gamma$ is natural and
$\gamma\circ\eta=\eta'$ since $\eta$ is the injection at $(c_0,\id)$; the induced
isomorphism between the two strictifications has component
$\theta'^{-1}_c\gamma_c\theta_c$, which is the identity on $\cCz$; it is
therefore a gauge equivalence. The strictification itself involves no further choice.
\end{proof}

\begin{proposition}[Gauge Saturation]\label{prop:gauge}
Under Assumption~\ref{ass:K}, $\Adm(S)$ is a union of gauge components; so are
$\ExtN(\Fz)$ and $\Canon_\cL(\Fz)$. Consequently (J1)--(J3), $\chance(S)$, and
all scoring events of Section~\ref{sec:capability} are gauge-invariant.
\end{proposition}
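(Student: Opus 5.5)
The plan is to show that each set is closed under the action of gauge equivalences. Fix an extension $F\in\Ext(\Fz)$ and a family of bijections $\sigma=(\sigma_c)$ with $\sigma_c=\id$ on $\cCz$. Define the transported extension by $F^\sigma(f):=\sigma_y\circ F(f)\circ\sigma_x^{-1}$ for $f\colon x\to y$.

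The first step checks that $F^\sigma$ is again in $\Ext(\Fz)$. Functoriality telescopes, since $\sigma_y F(g)\sigma_y^{-1}\sigma_y F(f)\sigma_x^{-1}=\sigma_z F(gf)\sigma_x^{-1}$, and identities are preserved. Because $\sigma$ is the identity on observed objects and $\cCz$ is full (Assumption~\ref{ass:C0}), we get $F^\sigma\circ\incl=\Fz$. Bijections preserve cardinality, so $s(F^\sigma(c))=s(F(c))$ at every new object, and hence $\ExtN(\Fz)$ is saturated. If the codomain is a general $\cD$ from Assumption~\ref{ass:D}, I would read ``bijection'' as ``isomorphism'' and assume $s$ is invariant under isomorphism. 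Skeletality then gives equal objects, which makes this automatic.

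The second step treats $\Adm(S)$. By Assumption~\ref{ass:K}, each $P\in\cK$ is invariant under gauge equivalence, so $P(F)\iff P(F^\sigma)$. Intersecting with $\Ext(\Fz)$ shows that $\Adm(S)$ is a union of components.

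The third step treats $\Canon_\cL(\Fz)$. By Definition~\ref{def:canon}, it is a union of classes $[\Lambda(\cC,\cCz,\Fz)]$, and each of these is a gauge component by definition, since gauge equivalence is an equivalence relation. It is reflexive via identities, symmetric via inverses, and transitive via composites, all of which remain identity on $\cCz$. For $\cL_{\Kan}$, Proposition~\ref{prop:wellposed} additionally ensures that these classes do not depend on the construction choices.

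For the consequences, note the following:
\begin{itemize}
\item (J1) and (J2) are statements about set emptiness or intersection, and (J3) counts components, so all three are determined by the saturated sets.
\item (J4) involves only $\cC$.
\item For $\chance(S)$, both numerator and denominator are cardinalities of fixed saturated sets. Invariance here means that the quantity is well defined and that membership of any answer is independent of its labelling.
\item The scoring events $L(S)\in\Adm(S)$, $L(S^\circ)\in\Canon_\cL(\Fz)$, and membership in $\ExtN$ are memberships in saturated sets. A renamed answer therefore receives the same score.
\end{itemize}

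The main obstacle is minor: being precise that Assumption~\ref{ass:K}'s invariance is stated for exactly the gauge group $G_F$ as defined, that is, identity on observed objects. Everything else is routine transport of structure.
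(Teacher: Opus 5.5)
Your proposal is correct and follows the paper's proof: gauge-invariance of each $P\in\cK$ (Assumption~\ref{ass:K}) saturates $\Adm(S)$, the fact that renaming preserves object assignments and hence sizes saturates $\ExtN(\Fz)$, and $\Canon_\cL(\Fz)$ is a union of gauge components by Definition~\ref{def:canon}. Your appeal to fullness when checking $F^\sigma\circ\incl=\Fz$ is unnecessary, since every morphism of $\cCz$ has both endpoints at objects where $\sigma$ is the identity.
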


\begin{proof}[Proof of Proposition~\ref{prop:gauge}]
A gauge transformation $\sigma$ is a natural isomorphism $F\cong\sigma\cdot F$
restricting to the identity on $\cCz$, so each $P\in\cK$ takes the same value on
$F$ and $\sigma\cdot F$; gauge fixes object assignments, hence sizes, saturating
$\ExtN$; $\Canon_\cL$ is a union of components by
Definition~\ref{def:canon}.
\end{proof}

\begin{proposition}[Identifiability Implies Finiteness]\label{prop:finite}
For any $F\in\Ext(\Fz)$, the gauge component $[F]$ equals the orbit of $F$ under
$G_F=\prod_{c\notin\cCz}\Aut_\cD(F(c))$; all members share the object assignment;
and $|[F]|\le\prod_{c}\lvert\Aut_\cD(F(c))\rvert<\infty$. Hence under (J3),
$\Adm(S)$ is finite with a well-defined cardinality profile $(n_c)$, and
$\Adm(S)\subseteq\ExtN(\Fz)$ iff $N\ge\max_c n_c$. In $\FinSet$:
$G_F=\prod_c\Sym(n_c)$ and $|[F]|\le\prod_c n_c!$.
\end{proposition}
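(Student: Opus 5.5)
The plan is to read the gauge action of $G_F$ directly off the definition of gauge equivalence, and then to use that every renaming fixes the object assignment.

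\textbf{Step 1: the component is an orbit.} Take any $\sigma=(\sigma_c)_{c\notin\cCz}\in G_F=\prod_{c\notin\cCz}\Aut_\cD(F(c))$ and set $\sigma_c=\id$ on $\cCz$. Define $(\sigma\cdot F)(f):=\sigma_y\circ F(f)\circ\sigma_x^{-1}$ for $f\colon x\to y$, with the same objects as $F$.
\begin{itemize}
\item Conjugation preserves identities and composition, so $\sigma\cdot F$ is a functor.
\item Since $\sigma$ is the identity on $\cCz$, we get $(\sigma\cdot F)\circ\incl=\Fz$, so $\sigma\cdot F\in\Ext(\Fz)$.
\item By construction $\sigma$ is a gauge equivalence from $F$ to $\sigma\cdot F$, so the orbit lies in $[F]$.
\end{itemize}
For the converse, let $F'\in[F]$, witnessed by isomorphisms $\sigma_c\colon F(c)\to F'(c)$. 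The point to check is that $F'(c)=F(c)$ as objects. This holds because $\cD$ is skeletal: isomorphic objects are equal, so each $\sigma_c$ is an automorphism of $F(c)$. Hence $\sigma\in G_F$ and $F'=\sigma\cdot F$. This gives orbit $=$ component, and also that all members share the object assignment. In $\FinSet$ this is immediate, since $[n]\cong[m]$ forces $n=m$.

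\textbf{Step 2: the size bound.} By orbit–stabilizer, $|[F]|=|G_F|/|\mathrm{Stab}(F)|\le|G_F|=\prod_{c\notin\cCz}|\Aut_\cD(F(c))|$. This is finite because the hom-sets of $\cD$ are finite (A3). The statement's product over all $c$ is no smaller, since the observed factors are at least $1$.

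\textbf{Step 3: consequences under (J3).} If $\Adm(S)=[F]$ for a single $F$, then:
\begin{itemize}
\item $\Adm(S)$ is finite by Step 2;
\item every member has the object assignment of $F$, so the profile $n_c:=s(F(c))$ is well defined.
\end{itemize}
Membership in $\ExtN(\Fz)$ depends only on the sizes at new objects. Therefore $\Adm(S)\subseteq\ExtN(\Fz)$ iff $n_c\le N$ for all new $c$, i.e.\ $N\ge\max_c n_c$, with the max taken over new objects. For $\FinSet$, specialize: $\Aut([n])=\Sym(n)$, so $G_F=\prod_c\Sym(n_c)$ and $|[F]|\le\prod_c n_c!$.

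The only delicate point is the converse inclusion in Step 1, namely identifying arbitrary gauge isomorphisms with automorphisms. It relies on skeletality, which I will cite explicitly from A3. Everything else is routine.
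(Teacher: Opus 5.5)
Your proposal is correct and takes essentially the same route as the paper. In both, skeletality of $\cD$ forces a data-rooted isomorphism $F\cong F'$ to satisfy $F'(c)=F(c)$, so it lies in $G_F$ with $F'=\sigma\cdot F$; conjugation by any $\sigma\in G_F$ conversely gives a gauge equivalence, and finiteness follows from finite hom-sets. Your explicit orbit--stabilizer step and the remark that the maximum ranges only over new objects are minor clarifications that the paper leaves implicit.
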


\begin{proof}[Proof of Proposition~\ref{prop:finite}]
If $\alpha\colon F\Rightarrow F'$ is a natural isomorphism with identity
components on $\cCz$, skeletality (Assumption~\ref{ass:D}) forces
$F'(c)=F(c)$ for all $c$, so $\alpha$ is a gauge tuple and naturality reads
$F'=\alpha\cdot F$. Conversely each $\sigma\in G_F$ is such an isomorphism
$F\Rightarrow\sigma\cdot F$ (conjugation preserves the functor equations). So the
rooted-isomorphism relation is the gauge-orbit relation; finiteness is the
orbit-size bound with finite hom-sets (Assumption~\ref{ass:D}).
\end{proof}

\begin{proof}[Proof of Proposition~\ref{prop:seed}]
An extension is determined by $n=|F(b)|\ge 1$ ($n=0$ is impossible: no map
$[1]\to[0]$), the point $p$ in the image of $F(f)$, and $F(t)$, since
$F(g)\colon[n]\to[1]$ and $F(h)$ are forced. The relations say exactly:
$F(t)(p)=p$, $F(t)^2=\id$, nothing more. Writing $I(m)$ for the number of
involutions on $m$ points ($I(0)=I(1)=1$, $I(2)=2$, $I(3)=4$),
$|\Ext_4|=\sum_{n=1}^{4}n\cdot I(n-1)=1+2+6+16=25$, giving the denominator. For the Kan extensions, the only
morphism $a\to b$ is $f$ (since $t\circ f=f$) and $\Hom(c,b)=\emptyset$, so
$\In(b)=\{(a,f)\}$ and $\Lan(b)=\Fz(a)=[1]$; dually $\Out(b)=\{(c,g)\}$ and
$\Ran(b)=[1]$; all structure maps through $[1]$ are unique, the strict
representatives coincide, and $F(t)=\id_{[1]}$ violates K1 only. For the admissible component, K1
forces a moved point; moved points of an involution come in $2$-cycles; $p$ is
fixed, so $n\ge 3$; K2 caps $n=3$ (and $\cK$ entails the bound $N=4$,
Assumption~\ref{ass:N}). For $n=3$ the admissible $F(t)$ is exactly the
transposition of $[3]\setminus\{p\}$: three tables, one per $p$. Transitivity of
the gauge action $\sigma\cdot F_p=F_{\sigma(p)}$ ($\sigma\in\Sym(3)$) gives one
component (stabilizer of order $2$; orbit $6/2=3$). Minimality is forced, not
stipulated: $n=2$ admits no admissible extension. (J4) strong form:
$\In(b)\neq\emptyset$. All quantities are independently machine-verified by
exhaustive enumeration in the released certifier.
\end{proof}

\begin{proof}[Proof of Proposition~\ref{prop:null}]
(i) $L_\Lambda(S')=\Lambda(\cC,\cCz,\Fz)\in[\Lambda(\cC,\cCz,\Fz)]=\Adm(S')$.
(ii) By (J2) the output lies in $\Canon_\cL$, disjoint from $\Adm$; by (J1),
$\chance(S)>0$. (iii) $L_\Lambda(S^{\circ})=L_\Lambda(S)\in\Canon_\cL(\Fz)$
on every tested input. For independent uniform sampling, conditional on
$S$ and its calibration response, the constrained response remains uniform
on $\ExtN(\Fz)$. Its success probability is $\chance(S)$.
Conditioning further on a canonical calibration response and averaging over
$S$ gives the conditional chance term in (C1).
\end{proof}

\section{The Certified Seed as a Benchmark Item}\label{app:seed}

This appendix renders the instance $S^\ast$ of Proposition~\ref{prop:seed}
using nonce vocabulary and everyday terms. The three stages NARV, QUILB, SORM realize
$a,b,c$; the steps dax, rell, fen realize $f,g,t$; rules R1--R3 are the relations
$t\circ f=f$, $g\circ t=g$, $t^2=\id$; observations fix $\Fz$; requirements
D1--D2 are $\cK$; the answer-format bound realizes $N=4$.

\begin{quote}\small\ttfamily
You are completing the design of a small three-stage signal pipeline.\\[2pt]
The pipeline has three stages: NARV, QUILB, and SORM. Signals flow NARV -> QUILB
-> SORM. There are three processing steps: "dax" carries each NARV state to a
QUILB state; "rell" carries each QUILB state to a SORM state; "fen" is an
internal rearrangement of QUILB.\\[2pt]
Wiring rules (must hold exactly, state by state): R1. Running dax and then fen
lands on the same state as running dax alone. R2. Running fen and then rell gives
the same result as running rell alone. R3. Running fen twice leaves every QUILB
state where it started.\\[2pt]
Recorded observations (fixed): NARV has exactly one state, n1. SORM has exactly
one state, s1. The end-to-end pipeline sends n1 to s1.\\[2pt]
Engineering requirements: D1. QUILB has at most 3 states. D2. fen is NOT the
do-nothing map.\\[2pt]
Answer format bound: designs with up to 4 QUILB states are well-formed answers.
Give one complete implementation: the set QUILB and full tables for dax, rell,
fen.
\end{quote}

An answer is correct precisely when it lies in the single admissible gauge
orbit, whose representative has QUILB $=\{p,q,r\}$ with dax\,$\colon
n1\mapsto p$, rell constant, and fen fixing $p$ while swapping $q$ and $r$. The Kan-default answer is the singleton QUILB with fen the identity; it
reproduces every observation, violates exactly D2, and is the diagnostic
failure mode. The matched control replaces D1--D2 by the single requirement that QUILB has
at most one state, and its unique admissible answer is then the Kan extension
itself.

\section{Generator Integrity Discipline}\label{app:discipline}

The release checks the mathematical instances, their presentation, and
their scores. The implemented checks are as follows.
\begin{enumerate}[leftmargin=1.6em,itemsep=1pt,topsep=2pt]
\item \emph{Constraint-language checks:} predicates refer to structural
properties rather than names of invented elements, and the declared size
bounds make the candidate space finite.
\item \emph{Certification and chance reporting:} smaller configurations
are enumerated, while the family theorem supplies counts for larger
pointed chains. Chance is the fraction of admissible strict tables within
the declared bound. Tests compare the chain counting formulas with
enumeration and check the computed Kan extensions.
\item \emph{Individual constraint checks:} chain certificates report the
fraction of extensions satisfying each constraint separately and flag
fractions below $10\%$. The reported flags remain visible alongside the
certificates; they do not certify propagation or computational hardness.
\item \emph{Presentation controls:} two cover stories preserve the
mathematical instance while changing its vocabulary. The scaled and
asymmetric datasets additionally repeat configurations with fresh nonce
names. Within a rendering, prompt arms share the data and answer format
and replace the requirement block. Line matching does not imply equal
token or required-answer lengths.
\item \emph{Grader checks:} the chain grader is compared with certified
membership on all bounded extensions of the six enumerated original
configurations under both renderings. Renaming, malformed, oversized,
and rule-violating answers are also tested. The asymmetric grader is
checked exhaustively on configurations with at most $8{,}000$ extensions
and on a deterministic sample for the larger configuration.
\item \emph{Output-budget checks:} finish reasons are recorded, unfinished
responses count as failures, and length-limited responses are reported
separately. The budget supplement repeats calibration, jump, and control
calls for four models at $18{,}000$ tokens.
\item \emph{Access and record checks:} provider refusals and checkpoint
serving failures are distinguished from completed evaluation calls.
Recorded model identifiers, prompts, answers, and finish reasons permit
the grades to be replayed. Budgets omitted from older response records
are documented in the run configurations.
\end{enumerate}

\paragraph{Individual-constraint Diagnostics.} The generators plant
solutions and certify (J1)--(J4) and $\chance$ within the declared bound.
Smaller configurations are checked by enumeration; the family theorem
supplies the counts and certificates for larger chains.
Appendix~\ref{app:discipline} records the implemented checks on the
constraints, rendering, grading, and evaluation conditions. The chain
certifier also reports the fraction of bounded extensions satisfying each
constraint separately, using $10\%$ as a diagnostic flag. Six of the nine
original configurations exceed this fraction on every constraint.
The configurations $(2,(2,3))$, $(3,(2,3,3))$, and $(3,(3,3,3))$ fall
below it on one or two size constraints, down to $1.4\%$. Their certificates
and flags are retained; the conjunction still determines the unique
admissible component.

The implemented checks support the evaluations reported here. Length-matched
answer alternatives, indirect-exclusion controls, and comparisons with
matched generic solvers would address further questions; they are not
reported as completed experiments.

\section{Proof of the Family Theorem}\label{app:family}

The proof has two parts. The local cycle constraints determine every
admissible structure, while the maps between successive objects give a
recursion for all bounded extensions. Their ratio yields the chance level
without enumerating those extensions.

\paragraph{The Finite Category.}
Order the objects as $a<b_1<\cdots<b_m<c$. Between any two distinct objects
$x<y$, there is one morphism $P(x,y)$; there are no backward morphisms.
The endomorphisms of $b_i$ form the cyclic group
$\{\id,t_i,\ldots,t_i^{p_i-1}\}$, and $a,c$ have identity endomorphisms.
Composition adds exponents modulo $p_i$ within a group, composes forward
paths to their unique forward path, and absorbs endomorphisms adjacent to a
forward path. These rules are associative. A composable triple with distinct
endpoints has a unique possible composite; a triple with equal endpoints
lies in a cyclic group. The full data subcategory on $a,c$ contains the
unique path $P(a,c)$.

\paragraph{Parametrizing Extensions.}
Write $\varphi_0$ for the map from $a$ to $b_1$, $\varphi_i$ for the map
from $b_i$ to $b_{i+1}$, and $\varphi_m$ for the map from $b_m$ to $c$.
An extension is determined by sizes $n_i\ge1$, permutations
$\tau_i\colon[n_i]\to[n_i]$, and these connecting maps, subject to
\[
\tau_i^{p_i}=\id,\qquad
\tau_i\circ\varphi_{i-1}=\varphi_{i-1},\qquad
\varphi_i\circ\tau_i=\varphi_i.
\]
The incoming image consists of fixed points of $\tau_i$, and the outgoing
map is constant on its cycles. Every such tuple defines a functor by
composing the connecting maps on longer paths. Its restriction to the
singleton data is automatic. Conversely, the composition table forces every
extension to have this form. Since $p_i$ is prime, each cycle of $\tau_i$
has length $1$ or $p_i$.

\paragraph{Admissible Structure and Its Orbit.}
The nonidentity constraint requires a $p_i$-cycle. The incoming map has
nonempty domain and supplies a fixed point, so $n_i\ge1+p_i$.
The size constraint gives the reverse inequality. Thus every admissible
$\tau_i$ has one fixed point $q_i$ and one $p_i$-cycle. The incoming map
is constant at $q_i$. This also forces each outgoing map to be constant,
since its image lies in the next object's singleton fixed-point set or in
$F(c)=[1]$. Conversely, these assignments satisfy every relation and
constraint, proving solvability and the structural characterization.

There are $1+p_i$ choices for $q_i$ and $(p_i-1)!$ cycles on its complement.
The connecting maps introduce no further choices, giving
$|\Adm|=\prod_i(1+p_i)(p_i-1)!$.
Given two admissible extensions, map their fixed points to each other and
map one cycle to the other in cyclic order at each object. These bijections
conjugate the endomorphisms and commute with the constant connecting maps.
They give a gauge equivalence, proving that the admissible set is one orbit.
Its stabilizer has order $\prod_i p_i$, agreeing with the direct count.

\paragraph{Canonical Exclusion and Support.}
At each $b_i$, there is a single incoming route from the data and a single
outgoing route to the data. Each corresponding comma category has one
object and its identity, so both Kan values are $[1]$.
Their strict representatives are the all-singleton extension. It violates
every $\mathrm{K1}_i$ and satisfies every $\mathrm{K2}_i$.
Each $b_i$ receives a path from $a$, giving the strong form of (J4).
The constraints entail the declared bound whenever
$N\ge\max_i(1+p_i)$. This completes (J1)--(J4).

\paragraph{The Bounded Count.}
To count all extensions, allow $1\le n\le N$ and let $k$ be the number of
fixed points of an endomorphism of prime order dividing $p$.
The number of permutations with these parameters is
\begin{equation}\label{eq:cycle-weight}
w_p(n,k)=
\begin{cases}
\displaystyle\frac{n!}{k!\,p^j j!},&j=(n-k)/p\in\mathbb Z_{\ge0},\\[4pt]
0,&\text{otherwise}.
\end{cases}
\end{equation}
Choosing the fixed set and arranging the remaining elements into $j$
unordered $p$-cycles gives this formula. The factor $p^j$ accounts for
cyclic rotations, and $j!$ accounts for permutations of the cycles.
An endomorphism with these parameters has $k+j$ orbits.

For fixed endomorphisms at successive objects, the first map has $k_1$
choices. A map from object $i$ to object $i+1$ chooses one target fixed
point independently for each source orbit, giving
$k_{i+1}^{\,k_i+(n_i-k_i)/p_i}$ choices. The final map to $[1]$ is unique.
Consequently
\[
|\ExtN(\Fz)|=
\sum_{(n_i,k_i)_{i=1}^m}
\left(\prod_{i=1}^m w_{p_i}(n_i,k_i)\right)
k_1\prod_{i=1}^{m-1}
k_{i+1}^{\,k_i+(n_i-k_i)/p_i},
\]
where the sum includes $1\le n_i\le N$, $0\le k_i\le n_i$, and
$p_i\mid(n_i-k_i)$. Terms with no fixed-point choices contribute zero.
Grouping the sum by its last object gives the transfer recursion
\begin{equation}\label{eq:transfer}
\begin{aligned}
R_1(n,k)&=w_{p_1}(n,k)\,k,\\
R_{i+1}(n',k')&=w_{p_{i+1}}(n',k')
\sum_{(n,k)}R_i(n,k)(k')^{\,k+(n-k)/p_i},\\
|\ExtN(\Fz)|&=\sum_{(n,k)}R_m(n,k).
\end{aligned}
\end{equation}
Each stage has $O(N^2)$ states and at most $O(N^4)$ transitions.
Precomputing the weights and integer powers gives $O(mN^4)$ arithmetic
operations. This counts arithmetic operations on integers, whose bit lengths
increase with the parameters; it is not a unit-cost claim about total runtime.

\paragraph{Decay of the Uniform Chance Level.}
A one-object bounded extension is a triple $(n_i,\tau_i,q_i)$ with $q_i$
a fixed point. A tuple of these one-object extensions defines a chain
extension by taking every connecting map to be constant at the next $q_i$.
All domains are nonempty, so the connecting maps recover the chosen $q_i$;
this construction is injective. Hence the chain denominator is at least the
product of the one-object denominators. The admissible numerator is their
product, and therefore
\[
\chance(S(m,\vec p\,))
\le\prod_{i=1}^m\chance(S(1,(p_i))),
\]
using the same bound $N$ throughout. For $p_i=2$, $N=4$, each factor is
$3/25$ by Proposition~\ref{prop:seed}, proving
Equation~\eqref{eq:family-chance}.

\paragraph{Composite Orders.}
If $p_i$ is composite, choose a proper divisor $d>1$. One fixed point and
one $d$-cycle satisfy $\tau_i^{p_i}=\id$ and both local constraints, as do
one fixed point and one $p_i$-cycle. Constant connecting maps make either
choice an admissible extension. The distinct size profiles are not
gauge-equivalent, so (J3) fails. \qed

\section{Proofs for Section~\ref{sec:beyond}}\label{app:beyond}

We use the following conjugation identity throughout. For a functor $H$ and
any family of objectwise automorphisms $\beta_x$, set
$(\beta\cdot H)(f\colon x\to y)=\beta_yH(f)\beta_x^{-1}$.
For composable $f,g$, the two middle automorphisms cancel, so
$(\beta\cdot H)(g)(\beta\cdot H)(f)=\beta_zH(gf)\beta_x^{-1}$.
Thus $\beta\cdot H$ is a functor, $\beta$ is a natural isomorphism
$H\Rightarrow\beta\cdot H$, and restriction to a subcategory on which
$\beta$ is the identity leaves every table unchanged.

\begin{proof}[Proof of Proposition~\ref{prop:predchar}]
If $\widetilde F$ and an isomorphism $\alpha\colon\widetilde F\Rightarrow G$
witness predictive correctness, restriction to $\cC$ gives a
$\cCz$-rooted isomorphism $F\cong G|_{\cC}$. Hence
$F\in[G|_{\cC}]$. Conversely, write $F=\sigma\cdot(G|_{\cC})$ for a
gauge transformation $\sigma$, and extend $\sigma$ by identities on the
held-out objects. The resulting functor $\widetilde F=\widehat\sigma\cdot G$
restricts to $F$, and $\widehat\sigma^{-1}\colon\widetilde F\Rightarrow G$
is the required rooted isomorphism. This proves the characterization and
gauge invariance.

Under soundness, $G|_{\cC}$ is admissible, so gauge saturation gives
$[G|_{\cC}]\subseteq\Adm(S)$. If (J3) holds, the containing admissible
component equals $[G|_{\cC}]$. Conversely, if every admissible completion
is predictively correct, the characterization gives the reverse containment,
so $\Adm(S)=[G|_{\cC}]$ and (J3) holds.
\end{proof}

\begin{proof}[Proof of Proposition~\ref{prop:separation}]
An extension of the relaxed seed is determined by
$n=|F(b)|\ge1$, the point $p=F(f)(1)$, and the involution $\tau=F(t)$
fixing $p$. Admissibility requires $\tau\neq\id$ and $n\le4$.
A moved point needs a transposition disjoint from $p$, so $n\ge3$.
At $n=3$, $\tau$ swaps the two other points, yielding three tables indexed
by $p$. At $n=4$, it fixes one additional point $q\neq p$ and swaps the
remaining pair, yielding $4\cdot3=12$ tables indexed by $(p,q)$.
Permutations act transitively on each collection, while their different
sizes preclude a gauge equivalence. These are exactly $Y_3$ and $Y_4$.

We specify the larger category by its complete finite composition table.
Adjoin objects $u,v$ to the seed category, their identities, and arrows
\[
\mu\colon u\to b,\quad \rho\colon b\to v,\quad
\gamma\colon u\to c,\quad \psi\colon a\to v,\quad
\theta\colon u\to v.
\]
Besides identity compositions, all composable pairs of nonidentity arrows
are listed below; the row arrow is composed after the column arrow:
\[
\begin{array}{c|ccc}
\circ & f&t&\mu\\ \hline
t&f&\id_b&\mu\\
g&h&g&\gamma\\
\rho&\psi&\rho&\theta
\end{array}
\]
Every composable triple of nonidentity arrows has middle arrow $t$.
Associativity follows from $t^2=\id_b$, the relations $tf=f$, $t\mu=\mu$,
and $gt=g$, $\rho t=\rho$ in the table. Thus this defines a category with
five objects and fourteen morphisms. Its restriction to $\{a,b,c\}$ is
exactly the seed category, so the inclusion is full.

Define the ground truth by
\[
G(a)=G(c)=G(u)=[1],\qquad G(b)=[4],\qquad G(v)=[2],
\]
with $G(f)(1)=1$, $G(\mu)(1)=2$, $G(t)=(3\ 4)$ fixing $1,2$,
and
\[
G(\rho)(2)=2,\qquad
G(\rho)(1)=G(\rho)(3)=G(\rho)(4)=1.
\]
The map $G(g)$ is unique, and the remaining tables are the composites in
the displayed table. Its absorption relations hold because both chosen
points are fixed and $G(\rho)$ is constant on the transposed pair.
Hence $G$ is a functor and $G|_{\cC}\in Y_4$, establishing soundness.

To specify precisely the observations, let $\mathcal O$ be the full subcategory
on $\{a,c,u,v\}$ and reveal $G|_{\mathcal O}$. Its nonidentity arrows are
$h,\gamma,\psi,\theta$; in particular,
$G(\psi)(1)=1$ and $G(\theta)(1)=2$.
For any $F\in Y_4$, let $p$ be its incoming point and $q$ its other fixed
point. Extend it by $\widetilde F(\mu)(1)=q$ and by the map
$\widetilde F(\rho)$ sending $q$ to $2$ and every other point to $1$.
Together with the same held-out sets and the forced composite tables, these
assignments define a functor agreeing with $G$ on $\mathcal O$.

Conversely, suppose an extension $\widetilde F$ of $F$ agrees with those
observations. The point $q=\widetilde F(\mu)(1)$ is fixed by $F(t)$
because $t\mu=\mu$. Functoriality and the recorded values imply
\[
\widetilde F(\rho)(p)=1,\qquad
\widetilde F(\rho)(q)=2.
\]
Thus $q\neq p$, so $F(t)$ has at least two fixed points. No member of
$Y_3$ can satisfy the observations. This is an observable refutation,
separate from the structural characterization: Proposition~\ref{prop:predchar}
and $G|_{\cC}\in Y_4$ show that exactly $Y_4$ is predictively correct.
\end{proof}

Soundness is essential to the preceding conclusion. If the original tight
seed is paired with this four-element ground truth, its unique admissible
answer still disagrees with the world, because its size constraint is false
there. Our distinction therefore separates ambiguity under valid constraints
from error in the supplied constraints themselves.

\begin{proof}[Proof of Theorem~\ref{thm:chainwd}]
The successor category is finite and its data inclusion is proper and full
by definition. The codomain is unchanged, and the successor constraints
satisfy A4 and entail the new bound. These are A1--A5 for the incorporated
instance.

Let $X'=\sigma\cdot X$ be two gauge-equivalent commitments. Regard
$\sigma\colon X\Rightarrow X'$ as an isomorphism of the successor data,
and extend it by identities on $\cC'\setminus\cC$. Conjugation gives
\[
c_\sigma\colon\Ext(X)\longrightarrow\Ext(X'),\qquad
H\longmapsto\widehat\sigma\cdot H.
\]
Its inverse is $c_{\sigma^{-1}}$, and it preserves object assignments.
Successor constraints are invariant under this renaming, including at
the observed objects, so $c_\sigma$ maps admissible sets bijectively onto
one another. It also preserves bounded extension sets. A successor gauge
tuple is the identity on $\cC$, whereas $\widehat\sigma$ is the identity
outside $\cC$; these families commute. Therefore $c_\sigma$ carries each
successor gauge component bijectively onto a successor gauge component.
This proves equality of component sizes, bounded counts, and, when the
denominator is nonzero, chance. It also proves invariance of (J1) and
(J3); (J4) depends only on the shared category and inclusion.

For (J2), the isomorphism $\sigma$ induces isomorphisms between the
pointwise comma-category diagrams defining the Kan extensions of $X$ and
$X'$. Their universal properties supply natural comparison isomorphisms
between the Kan extensions. After strictification, the comparison at each
data object is $\sigma$. Composing it with $\widehat\sigma^{-1}$ therefore
gives a data-rooted isomorphism from the transported Kan extension of $X$
to the Kan extension of $X'$. Consequently
\[
c_\sigma\bigl(\Kan(X)\bigr)=\Kan(X'),
\]
and disjointness from the admissible set is preserved.
More generally, the same argument applies to a library whose operators
have isomorphism-invariant domains and satisfy
$[\Lambda(X')]=[c_\sigma(\Lambda(X))]$ whenever they are defined.
This is the required respect for data isomorphisms; constraint-blindness
alone does not imply it.
\end{proof}

To compare admissibility at different stages, first express the conditions
on one common category. Write $\cC_j$ for the completed category at stage
$j$. At a later stage $\ell$, let
\[
\mathcal A_j^{(\ell)}:=
\{H\colon\cC_\ell\to\cD\mid H|_{\cC_j}\in\Adm(S_j)\},
\qquad j\le\ell.
\]
These sets impose the earlier admissibility conditions, including their
committed observations, on candidate tables for the later category.

\begin{lemma}[Monotonicity under Fixed Commitments]\label{lem:mono}
Along a committed chain,
$\mathcal A_1^{(\ell)}\supseteq\cdots\supseteq
\mathcal A_\ell^{(\ell)}=\Adm(S_\ell)$.
\end{lemma}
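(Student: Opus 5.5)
The plan is to reduce the chain of inclusions to a single step, $\mathcal A_{j+1}^{(\ell)}\subseteq\mathcal A_j^{(\ell)}$ for $1\le j<\ell$, and to identify the last term separately.

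First I unwind what a committed chain gives. At stage $j$ we have $S_j=(\cC_j,\cC_{j-1},\cD,X_{j-1},\cK_j,N_j)$, where $X_{j-1}\in\Adm(S_{j-1})$ is the commitment, and the successor scheme satisfies $\cC_{j-1}\subset\cC_j$ as a full subcategory (Definition~\ref{def:chain}). By induction along the chain, every $\cC_j$ is a full subcategory of $\cC_\ell$. Restrictions therefore compose: for $H\colon\cC_\ell\to\cD$ we have $(H|_{\cC_{j+1}})|_{\cC_j}=H|_{\cC_j}$.

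For the inclusion step, take $H\in\mathcal A_{j+1}^{(\ell)}$ and set $H':=H|_{\cC_{j+1}}\in\Adm(S_{j+1})$. Every element of $\Adm(S_{j+1})\subseteq\Ext(X_j)$ restricts on the observed subcategory $\cC_j$ to the data $X_j$, so $H'|_{\cC_j}=X_j$. Since $X_j$ was chosen in $\Adm(S_j)$, this gives $H|_{\cC_j}=H'|_{\cC_j}=X_j\in\Adm(S_j)$, hence $H\in\mathcal A_j^{(\ell)}$. In fact the argument shows something stronger: for $j<\ell$, membership forces the restriction to equal the single commitment $X_j$, not just to be admissible. This is the exact extension of commitments noted before the lemma.

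For the final term, $\mathcal A_\ell^{(\ell)}=\{H\colon\cC_\ell\to\cD\mid H\in\Adm(S_\ell)\}=\Adm(S_\ell)$, because restriction to $\cC_\ell$ is the identity and $\Adm(S_\ell)$ consists of functors on $\cC_\ell$.

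The only real obstacle is bookkeeping. The stage-$j$ admissible set lives on $\cC_j$, while the sets being compared live on $\cC_\ell$, and the committed tables appear as observed data. So I must check that the "data-fixing" condition $F\circ\incl=\Fz$ in $\Ext$ transports correctly through iterated full inclusions. I would state explicitly that restriction along composite inclusions is strictly functorial in the tables, which holds because we use strict functors into skeletal $\FinSet$. Once that is in place, transitivity of $\supseteq$ finishes the chain.
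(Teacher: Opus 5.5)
Your proof is correct and is essentially the paper's argument: you restrict $H\in\mathcal A_{j+1}^{(\ell)}$ to $\cC_{j+1}$, use that every stage-$(j+1)$ admissible functor restricts to the committed data $X_j\in\Adm(S_j)$, apply this for each $j<\ell$, and note that restriction to $\cC_\ell$ is the identity for the last term. The compatibility of iterated restrictions that you flag as the main bookkeeping point follows from associativity of composing with the inclusion functors, so skeletality of $\FinSet$ is not needed there.
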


\begin{proof}[Proof of Lemma~\ref{lem:mono}]
Write $X_j$ for the commitment at stage $j$. If
$H\in\mathcal A_{j+1}^{(\ell)}$, its restriction to $\cC_{j+1}$ is
admissible at that stage and hence restricts to the observed functor
$X_j$ on $\cC_j$. Since $X_j\in\Adm(S_j)$, this gives
$H\in\mathcal A_j^{(\ell)}$. Applying this argument for each $j<\ell$
gives the inclusions. Restriction to $\cC_\ell$ itself is the identity,
so the last set equals $\Adm(S_\ell)$.
\end{proof}

\paragraph{Retrospective correctness and stabilization.}
Restriction of a stage-$(\ell+1)$ witness to $\cC_\ell$ retains all
earlier constraints and its stage-$k$ commitment. Thus
$\mathcal R_{\ell+1}(k)\subseteq\mathcal R_\ell(k)$, with
$\mathcal R_k(k)=\Adm(S_k)$.
If $X$ has a witness $H$ and $X'=\sigma\cdot X$ is stage-$k$
gauge-equivalent to it, extend $\sigma$ by identities to $\cC_\ell$.
Conjugating $H$ yields a witness for $X'$: the fixed earlier observations
are preserved, and every successor constraint is invariant under renaming
at all its objects. Hence each $\mathcal R_\ell(k)$ is a union of
stage-$k$ gauge components.

The declared bound makes $\Adm(S_k)$ finite. Each answer that ever leaves
the decreasing sequence has a first elimination stage. There are finitely
many such answers, so after the largest of these stages no further change
occurs. The eventual value is the intersection defining ultimate
correctness. This gives no bound on when the last elimination occurs.
Under (J3), gauge saturation leaves only the possibilities
$\mathcal R_\ell(k)=\varnothing$ and
$\mathcal R_\ell(k)=\Adm(S_k)$, proving the common continuation verdict.

\begin{proof}[Proof of Theorem~\ref{thm:entrench}]
Take the relaxed seed as stage one, whose two admissible components were
established in Proposition~\ref{prop:separation}. Adjoin one object $d$,
its identity, and arrows $r\colon d\to b$ and $r_c\colon d\to c$.
Besides the old and identity compositions, impose only
$t\circ r=r$ and $g\circ r=r_c$. Associativity follows from the old
relations $t^2=\id_b$ and $gt=g$; no new arrow has both endpoints in the
old category. This is a finite proper full extension.

The successor constraints say that the image of the marked cone
$(d;r,r)$ over the parallel pair $(t,\id_b)$ is limiting, and that
$|F(d)|\le1$; take $N'=1$. Both constraints are invariant under all
objectwise renamings, and the latter entails the bound.
For a commitment $X$, write
$\Phi=\{x\in X(b):X(t)(x)=x\}$.
Every successor extension is specified by $q=|F(d)|\ge0$ and
$F(r)\colon[q]\to X(b)$ with image in $\Phi$; the table of $r_c$ is
forced by composition. The inclusion $\Phi\hookrightarrow X(b)$ is
the equalizer of $X(t)$ and the identity, since every map equalizing
them factors uniquely through that inclusion. Therefore the marked
cone is limiting exactly when $F(r)$ is a bijection $[q]\to\Phi$.
This condition is decidable directly from the two finite tables, so the
constraints also satisfy the evaluation requirement in A4.

For $X\in Y_3$, $\Phi$ is its single incoming fixed point. The
constraints force $q=1$ and the unique map selecting that point,
giving exactly one admissible successor table. To identify it with the
right Kan extension, the outgoing routes at $d$ are $(b,r)$ and
$(c,r_c)$. Compatibility with $t$ forces the $b$-coordinate to lie in
$\Phi$, and compatibility with $g$ determines the $c$-coordinate.
Thus the right Kan value at $d$ is $\Phi$, and in this singleton case
its strict representative is exactly the unique admissible table.
The successor has (J1), (J3), and (J4), since $\Out(d)\neq\varnothing$,
and its admissible set is the right Kan component. It is therefore a
control instance targeting $\Rans$.

For $X\in Y_4$, $|\Phi|=2$. The limit constraint forces $q=2$,
contradicting $q\le1$, so no admissible successor exists. A two-stage
continuation exists exactly for $X\in Y_3$, which proves
$\mathcal R_2(1)=Y_3$.
\end{proof}
\section{Experimental Details}\label{app:exp}

\paragraph{Plain-language Pipeline.} One instance travels the following path.
The generator writes the composition tables, the observed functor, and the
constraint list. The certifier either enumerates every extension within the
declared bound and checks the four conditions directly, or, on the
theorem-certified tier, verifies the hypotheses of Theorem~\ref{thm:family}
and reads off the counts. The renderer wraps the tables in a nonce cover
story, the grader parses an answer back into a candidate extension and
reports one of five outcomes, and the harness records the response and
finish reason. The released instances and raw answers permit this pipeline
to be checked independently of the model's performance.

\paragraph{Sensitivity to the Size Bound.} The declared bound determines
which candidate tables are permitted and enters the prompt, grader, and
chance denominator. At $m=2$, $p=(2,2)$, the extension count grows from
$4{,}387$ at $N=4$ to $117{,}867$ at $N=5$, and chance falls from
$2.1\times10^{-3}$ to $7.6\times10^{-5}$. The unique admissible structure
fits every $N\ge\max_i(1+p_i)$, while the generator's default is
$N=\max_i(1+p_i)+1$. The instance $(2,(3,2))$ already uses the minimum
$N=4$. Enlarging the candidate bound does not change the admissible
structure, because each jump size constraint still permits at most
$1+p_i$ elements. Thus the bound changes the uniform-table baseline
without by itself establishing a change in computational difficulty.
Padding an admissible answer is excluded by these same size constraints.

\paragraph{Models and Access.} The four API models were accessed through a
single aggregator on August 20--21, 2026. The ten open checkpoints in
Table~\ref{tab:main} were served locally on an RTX PRO~6000 Blackwell with
96\,GB of memory; run notes record vLLM versions 0.27.1 and 0.28.0.
Llama~3.3~70B and Qwen2.5~72B use the RedHatAI FP8 checkpoints identified
in the response logs. The runs fix temperature and retain checkpoint
sampling defaults for other parameters. For example, Llama uses
top-$p=0.9$, while Qwen3 uses top-$p=0.95$ and top-$k=20$.
Comparisons therefore describe the evaluated model and decoding
configuration together. Three attempted Gemma checkpoints were excluded
after kernel failures prevented evaluation. An earlier provider access
configuration also refused the task through content filtering; those
setup failures are separate from the reported evaluation calls.

\paragraph{Sampling.} Table~\ref{tab:sampling} gives the schedules for the
original evaluations and their supplements. Each prompt variant receives
one greedy call followed by temperature-$0.7$ calls, so the single scaled
control call is greedy. Each call starts a fresh conversation. The API
wording supplement adds four jump calls on each of two instances under
the second cover story, without calibration or control under that story.
The table in the main text retains those eight calls per API model in its
jump counts. Open chain and asymmetric evaluations use both cover stories;
scaled and scratchpad evaluations use the first story.

\begin{table}[htbp]
\centering
\caption{Sampling schedules. Calls per cell are ordered as
jump/calibration/control; asymmetric cells have separate left and right
controls. A cell is one model, instance, and rendering. API wording calls
are included with the API chain counts in Table~\ref{tab:main}.}
\label{tab:sampling}
\small
\setlength{\tabcolsep}{4pt}
\begin{tabular}{lrrrrrr}
\toprule
Evaluation & Models & Instances & Stories & Calls/cell & Total & Jump\\
\midrule
API chain & 4 & 9 & 1 & 6/4/2 & 432 & 216\\
API wording & 4 & 2 & 1 & 4/0/0 & 32 & 32\\
Open chain & 10 & 9 & 2 & 6/4/2 & 2,160 & 1,080\\
Scaled chain & 10 & 180 & 1 & 3/2/1 & 10,800 & 5,400\\
Asymmetric & 10 & 70 & 2 & 4/3/2/2 & 15,400 & 5,600\\
\midrule
Standard evaluations & & & & & 28,824 & 12,328\\
Scratchpad & 10 & 9 & 1 & 6/4/2 & 1,080 & 540\\
18,000-token budget & 4 & 9 & 2 & 6/4/2 & 864 & 432\\
\midrule
All recorded evaluations & & & & & 30,768 & 13,300\\
\bottomrule
\end{tabular}
\end{table}

The standard evaluations and scratchpad supplement use a $12{,}000$-token
output budget. The budget supplement repeats all three chain arms for
four open models at $18{,}000$ tokens. Later logs record the budget per
response; for $1{,}976$ older responses it is documented in the evaluation
commands and run notes. A released pilot of $165$ answers used an earlier
harness with a $6{,}000$-token budget and without recorded finish reasons, and is excluded
from all reported counts. The $259$ dataset entries comprise the nine
original chain configurations, eighteen scaled configurations with ten
nonce vocabularies each, and seven asymmetric configurations with ten
vocabularies each. The original nine chain configurations recur in the
scaled set, giving eighteen distinct chain configurations overall.

\paragraph{Grading.} Answers are classified as admissible, Kan default,
other valid completion, rule-violating, or invalid. The chain grader uses
the structural characterization of Theorem~\ref{thm:family}(i); the
asymmetric grader checks its corresponding structural predicates.
The chain grader agrees with certified membership on all $138{,}691$
enumerated extensions of the six released $m\le2$ instances under both
renderings, and passes the renaming and malformed-answer checks.
An answer with a recorded non-stop finish reason is invalid even when
an intermediate design can be parsed. The chain and scratchpad parser
uses the last complete answer block; the asymmetric grader uses its own
table parser. Replaying the current graders and finish policy reproduces
all $30{,}768$ recorded classes, with unique sample keys. Earlier
scratchpad extraction repairs affect $88$ responses, whose earlier
classes remain available in the logs.

\paragraph{Aggregates.} On the original chain evaluation, the API models
return canonical calibration answers in $129/144$ calls; conditioning on
responses that are neither length-limited nor empty gives $129/132$.
For the eight open models with reasoning traces, the corresponding counts
are $297/576$ and $297/352$. Controls succeed in $69/72$ API calls,
$227/288$ reasoning-model calls, and $72/72$ calls from the two models
answering directly. The main table uses unconditional calibration rates.
The conditional fractions describe the subset of completed responses.

Across all recorded evaluations, $7{,}135/13{,}300$ jump answers are
admissible. The remaining $6{,}165$ comprise $3{,}571$ invalid responses,
$2{,}588$ rule violations, five other valid completions, and one canonical
completion. Of the invalid responses, $3{,}541$ have finish reason length.
A parseable intermediate table exists in $1{,}439$ non-stop jump responses;
these remain invalid under the completion policy. The standard evaluations
alone contain $6{,}523/12{,}328$ admissible jump answers and the same single
canonical answer. Scratchpad and increased-budget calls contribute
$259/540$ and $353/432$ admissible answers, respectively, and zero canonical
answers.

\paragraph{Matching Calibration and Success.}\label{app:matched}
To estimate the conditional quantity in (C1), we match independent
calibration and jump calls by model, instance, rendering, and temperature.
Let $\widehat d_i$ be the fraction of canonical calibration calls in
cell $i$, and $\widehat a_i$ the fraction of admissible jump calls.
Giving each instance and rendering equal weight before conditioning gives
\begin{equation}\label{eq:empirical-matched}
\widehat A_{\mathrm{cal}}=
\frac{\sum_i\widehat d_i\widehat a_i}{\sum_i\widehat d_i},\qquad
\widehat B_{\mathrm{cal}}=
\frac{\sum_i\widehat d_i\chance(S_i)}{\sum_i\widehat d_i}.
\end{equation}
The success estimate and its uniform-table baseline use the same
calibration weights. Both are undefined when the canonical calibration count
is zero. Table~\ref{tab:matched} gives the temperature-$0.7$
estimates; API wording cells without calibration are excluded from this
calculation. The main figures retain their original pooled call
fractions, whose greedy proportions differ across arms. We do not use
those pooled fractions as the same-policy conditional estimate.
These finite-sample estimates describe the evaluated cells, without
assigning a retrospective (C0) pass threshold.

\begin{table}[htbp]
\centering
\caption{Calibration-weighted accuracy at temperature $0.7$, with the matched
uniform-table baseline in parentheses. Undefined entries have a canonical
calibration count of zero; a dash indicates an unevaluated family. The scaled
control arm is greedy and is reported separately in the main experiment.}
\label{tab:matched}
\small
\setlength{\tabcolsep}{5pt}
\begin{tabular}{lccc}
\toprule
Model & Original chain & Scaled chain & Asymmetric\\
\midrule
GPT-5.6 Luna Pro & 0.954 (0.030) & -- & --\\
Claude Sonnet 5 & 0.978 (0.029) & -- & --\\
Gemini 3.1 Pro & 1.000 (0.030) & -- & --\\
DeepSeek V4 Pro & 0.800 (0.036) & -- & --\\
Qwen3.6 35B-A3B & 0.171 (0.070) & 0.038 (0.059) & 0.333 (0.114)\\
Qwen3 32B & 0.971 (0.033) & 0.886 (0.016) & 1.000 (0.090)\\
Qwen3.6 27B & \textit{undefined} & 0.167 (0.082) & 0.744 (0.108)\\
gpt-oss 20B & 0.974 (0.027) & 0.965 (0.014) & 0.982 (0.091)\\
Qwen3 14B & 0.916 (0.026) & 0.872 (0.015) & 1.000 (0.089)\\
Qwen3.5 9B & 0.320 (0.030) & 0.250 (0.015) & 0.926 (0.084)\\
Qwen3 8B & 0.747 (0.029) & 0.634 (0.015) & 0.997 (0.088)\\
Qwen3.5 4B & 0.620 (0.074) & 0.643 (0.077) & 0.922 (0.096)\\
Qwen2.5 72B & 0.000 ($2.15\times10^{-5}$) & 0.000 (0.026) & 0.010 (0.087)\\
Llama 3.3 70B & 0.000 (0.003) & 0.028 (0.005) & 0.019 (0.115)\\
\bottomrule
\end{tabular}
\end{table}

\paragraph{Intervals and Figure Populations.} The Wilson intervals in the
figures are nominal binomial intervals around the displayed pooled call
fractions. They summarize the evaluated collection and do not quantify
generalization to new models or new mathematical configurations.
Figure~\ref{fig:instrument} pools original and scaled chain calls under
both available cover stories, including the API wording calls.
Figure~\ref{fig:difficulty} uses the first cover story and groups by
configuration; the nine repeated open-model configurations each contain
six original and thirty scaled calls, while the nine additional
configurations have thirty calls each. API points have six calls per
configuration. Appendix figure captions specify the other populations.

\paragraph{Asymmetric Instances.}\label{app:asym}
The asymmetric generator uses \mbox{$a\xrightarrow{f}b\xrightarrow{g}c$}
with an involution $t$ on $b$, satisfying $tf=f$ and $gt=g$. The observed
sets at $a$ and $c$ have sizes $k$ and $\ell$, and the observed composite
is constant. The left and right Kan extensions have $k$ and $\ell$
elements at $b$, respectively, and both act identically under $t$.
The jump constraints require nonidentity $F(t)$, injective $F(f)$,
surjective $F(g)$, and $|F(b)|\le N$. Enumeration verifies a unique
admissible gauge class excluding both Kan extensions for
$(k,\ell,N)\in\{(2,1,4),(3,1,5),(1,2,3),(2,2,4),(3,2,5),(4,1,6),(5,1,7)\}$.
The left control pins the left Kan extension through injectivity and
$|F(b)|\le k$; the right control pins the right extension through
surjectivity and $|F(b)|\le\ell$.

\paragraph{Output Budget.}
The increased-budget run repeats calibration and control alongside jump
calls. At $18{,}000$ tokens, Qwen3.6~27B returns canonical calibration
answers in $4/72$ calls despite reaching $76/108$ admissible jump answers.
Qwen3.5~4B's controls change from $32/36$ to $28/36$. These independent
reruns show budget sensitivity alongside sampling variation.

\begin{table}[tbp]
\centering
\caption{What the output budget costs, on the pointed-chain family. Each entry counts
admissible answers out of $108$ constrained samples, with length-limited answers in brackets.
The Kan-default rate stays at $0$ of $432$ at both budgets.}
\label{tab:budget}
\small
\begin{tabular}{lcc}
\toprule
\textbf{model} & \textbf{12,000 tokens} & \textbf{18,000 tokens}\\
\midrule
Qwen3 8B & 83 \,[20] & 95 \,[2]\\
Qwen3.5 9B & 44 \,[64] & 97 \,[11]\\
Qwen3.5 4B & 30 \,[78] & 85 \,[19]\\
Qwen3.6 27B & 20 \,[88] & 76 \,[32]\\
\bottomrule
\end{tabular}
\end{table}

\paragraph{Likelihood Scores.}
The teacher-forced scorer evaluates one serialized representative per
candidate, disables template thinking where supported, and sums token
log probabilities without a terminal stop token. On the original chains,
the admissible candidate uses $2.05$--$2.94$ times as many tokens as the
canonical candidate. The canonical string wins $174/180$ comparisons
under the summed score and $132/180$ under the mean per-token score.
This sensitivity concerns candidate strings, not the probability mass of
entire gauge classes. On the equal-size asymmetric configuration, the
two Kan strings have equal token counts, and the left wins $142/200$
likelihood comparisons. Its sampled calibration counts are $228$ left
and $30$ right Kan answers out of $600$ calls, with $192$ invalid,
$146$ other valid, and four rule-violating responses.

\section{Additional Figures}\label{app:figs}

The following figures provide the complete model comparisons and
supplementary analyses referenced in Section~\ref{sec:experiments}.

\begin{figure}[htbp]
\centering
\includegraphics[width=\textwidth]{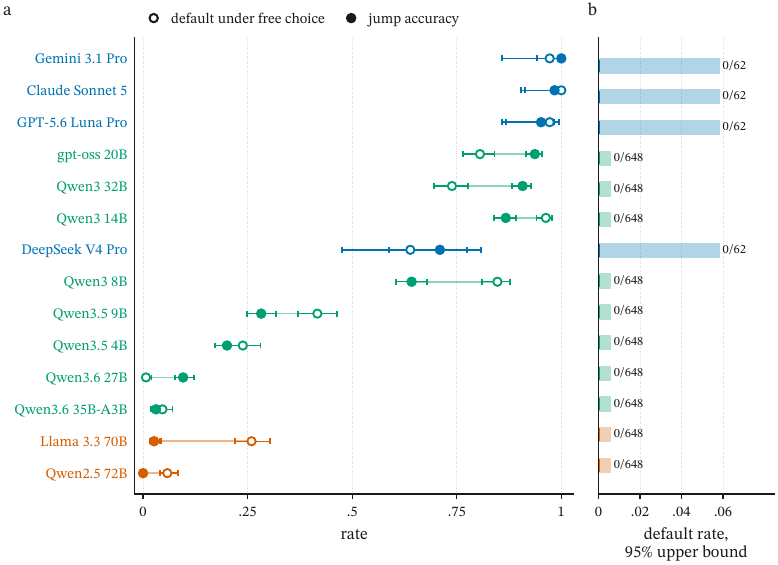}
\caption{Canonical calibration and jump accuracy pooled over the original and scaled
chains. Each API model contributes $36$ calibration and $62$ jump answers, including
its wording supplement; each open model contributes $432$ and $648$. Left: observed
rates over all calls, with nominal Wilson $95\%$ intervals. Right: the upper endpoint
for the canonical-response rate under the constraints, with event counts beside each bar.}
\label{fig:instrument}
\end{figure}

\begin{figure}[tbp]
\centering
\includegraphics[width=0.74\textwidth]{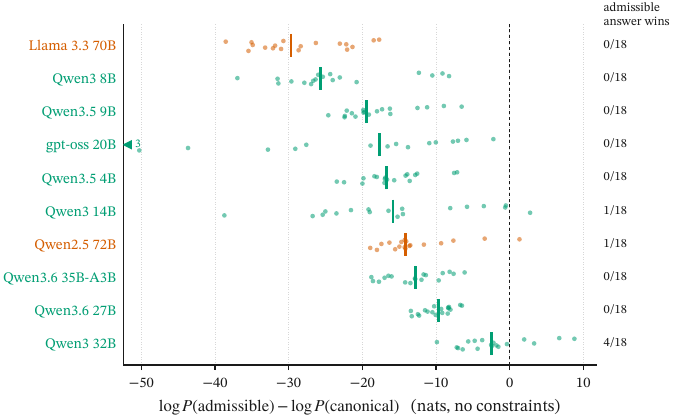}
\caption{Summed teacher-forced token log probability of one admissible answer string
minus that of one canonical string under the calibration prompt in immediate-answer mode.
Each open model has $18$ instance--rendering pairs. The bar is the median, the count at
right records admissible-string wins, and a triangle marks points beyond the axis.}
\label{fig:logprob}
\end{figure}

\begin{figure}[tbp]
\centering
\includegraphics[width=\textwidth]{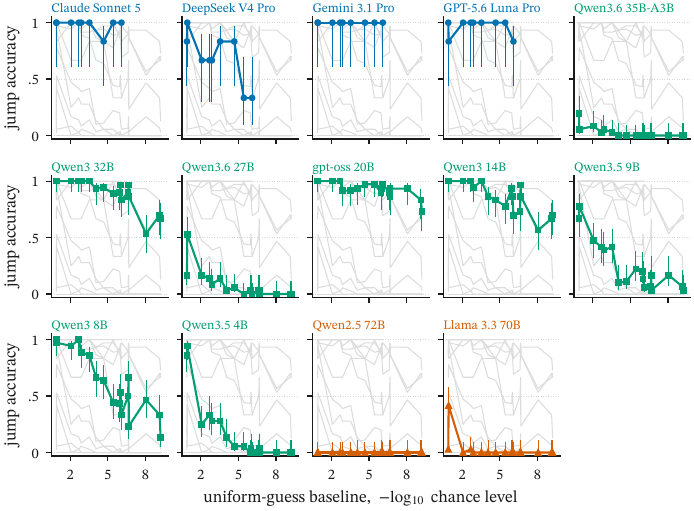}
\caption{Jump accuracy against the certified uniform-table chance level, one panel per
model with the other thirteen curves in grey. The primary cover story is used. Shared
open-model configurations pool $6+30=36$ original and scaled answers; additional
configurations have $30$, and API configurations have six. Bars are nominal Wilson
$95\%$ intervals. The horizontal axis describes the uniform-guess baseline.}
\label{fig:difficulty-full}
\end{figure}

\begin{figure}[tbp]
\centering
\includegraphics[width=\textwidth]{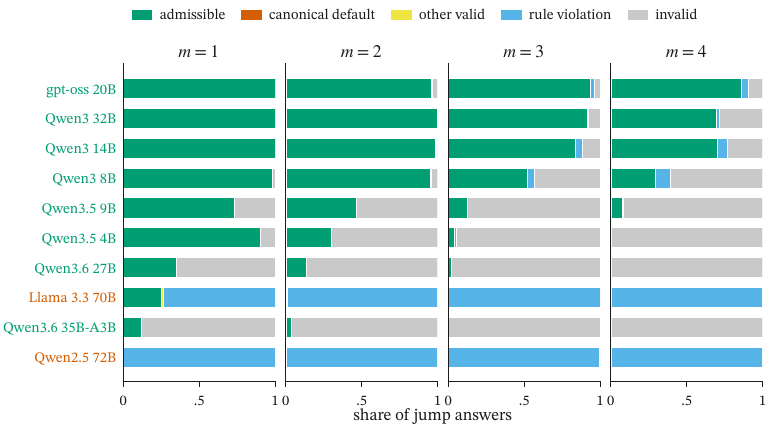}
\caption{Constrained outcomes in the scaled family, grouped by chain length $m$.
Each model contributes $60$, $120$, $240$, and $120$ calls at $m=1,2,3,4$.
Models are ordered by overall jump accuracy. The canonical outcome is absent
from every bar; the gray invalid category includes format and termination failures.}
\label{fig:taxonomy}
\end{figure}

\begin{figure}[tbp]
\centering
\includegraphics[width=0.82\textwidth]{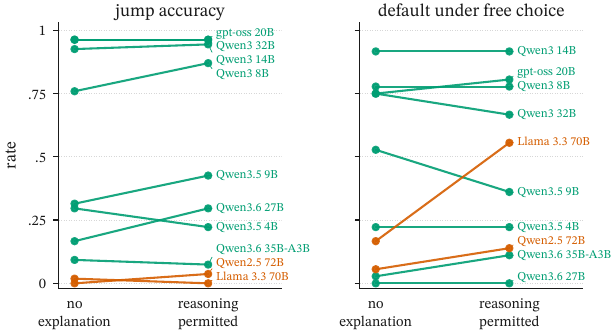}
\caption{The first cover story with explanations suppressed or permitted before
the final answer. Each condition has $54$ jump and $36$ calibration calls
per open model, and every call remains in the denominators. This comparison
changes the requested output format; it does not disable reasoning traces
already emitted by the model's chat template.}
\label{fig:scratchpad}
\end{figure}

\begin{figure}[tbp]
\centering
\includegraphics[width=\textwidth]{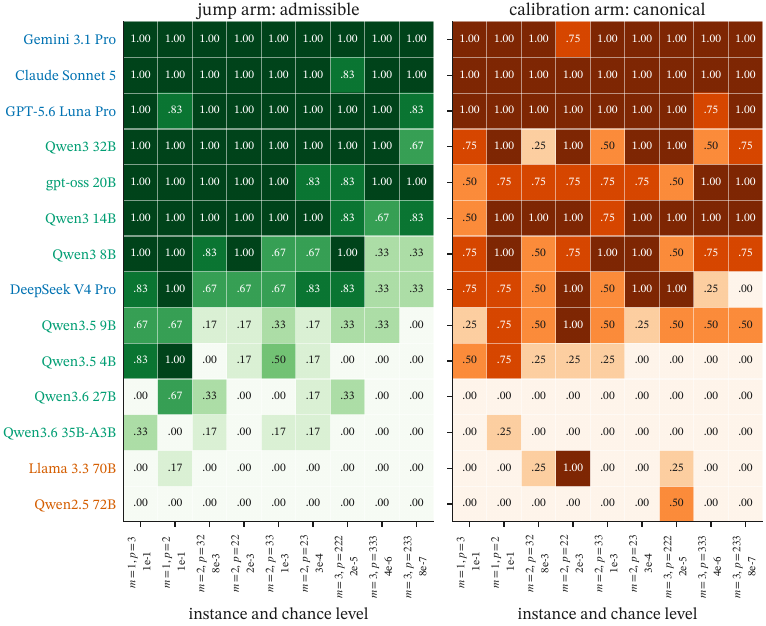}
\caption{The original pointed-chain instances under the primary cover story.
Each model--instance cell contains six jump and four calibration calls.
The panels show admissible jump fractions and canonical calibration
fractions, including failures in both denominators.}
\label{fig:heatmap}
\end{figure}

\begin{figure}[tbp]
\centering
\begin{subfigure}[t]{0.60\textwidth}
\centering
\includegraphics[width=\textwidth]{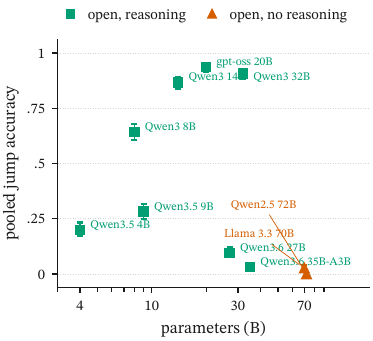}
\caption{Jump accuracy against nominal total parameter count. Each open model
pools $108$ original-chain and $540$ scaled calls.}
\label{fig:size}
\end{subfigure}\par\vspace{8pt}
\begin{subfigure}[t]{\textwidth}
\centering
\includegraphics[width=\textwidth]{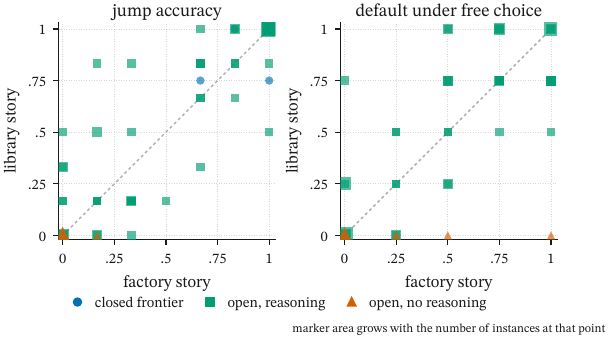}
\caption{The two stories on shared model--instance pairs. Open-model jump and
calibration cells contain six and four calls per story; API jump cells
contain six first-story and four second-story calls.}
\label{fig:rendering}
\end{subfigure}
\caption{Descriptive model-size and wording comparisons. Colors and symbols denote
the evaluated model groups. In the wording panels, marker area grows with
the number of instances sharing a location within one model; the paired calibration panels
cover the open models.}
\label{fig:appendix-pair}
\end{figure}

\end{document}